\newcommand{\mb}{\mathbf}
\newtheorem{thm}{\bf Theorem}
\newtheorem{lemma}{Lemma}
\newtheorem{defn}{Definition}
\renewcommand{\arraystretch}{0.5}
\newcommand{\our}{\text{DEAM}}
\newcommand{\ams}{\text{AMSGrad}}
\newcommand{\adadelta}{\text{AdaDelta}}
\newcommand{\rmsprop}{\text{RMSProp}}
\newcommand{\adam}{\textsc{ADAM}}
\newcommand{\adagrad}{\text{AdaGrad}}
\newcommand{\tabincell}[2]{\begin{tabular}{@{}#1@{}}#2\end{tabular}}
\begin{document}
	
\title{DEAM: Adaptive Momentum with Discriminative Weight for Stochastic Optimization}

\author{
Jiyang Bai\and
Yuxiang Ren\and
Jiawei Zhang
\affiliations
IFM Lab, Florida State University\\
\emails
\{jiyang, yuxiang, jiawei\}@ifmlab.org	
}

\maketitle

\begin{abstract}
Optimization algorithms with momentum, e.g., ({\adam}), have been widely used for building deep learning models due to the faster convergence rates compared with stochastic gradient descent (SGD). Momentum helps accelerate SGD in the relevant directions in parameter updating, which can minify the oscillations of parameters update route. However, there exist errors in some update steps in optimization algorithms with momentum like {\adam}. The fixed momentum weight (e.g., $\beta_1$ in {\adam}) will propagate errors in momentum computing.
In this paper, we introduce a novel optimization algorithm, namely \textbf{D}iscriminative w\textbf{E}ight on \textbf{A}daptive \textbf{M}omentum ({\our}). Instead of assigning the momentum term weight with a fixed hyperparameter, {\our} proposes to compute the momentum weight automatically based on the discriminative angle. 
In this way, {\our} involves fewer hyperparameters. {\our} also contains a novel backtrack term, which restricts redundant updates when the correction of the last step is needed. 
Extensive experiments demonstrate that {\our} can achieve a faster convergence rate than the existing optimization algorithms in training the deep learning models of both convex and non-convex situations.

\end{abstract}

\section{Introduction}\label{sec:introduction}
Deep learning methods can achieve outstanding performance in multiple fields including computer vision~\cite{KXSJ16}, natural language processing~\cite{LFMK15,DKY14}, speech and audio processing~\cite{MN17}, and graph analysis~\cite{JiaweiZhang2019Tutorial}. Training deep learning models involves an optimization process to find the parameters in the model that minimize the loss function. At the same time, the number of parameters commonly used in deep learning methods can be very huge.

Therefore, optimization algorithms are critical for deep learning methods: not only the model performance, but also training efficiency are greatly affected. 
In order to cope with the high computational complexity of training deep learning methods, stochastic gradient descent (SGD)~\cite{sgd_overview} is utilized to update parameters based on the gradient of each training sample instead. The idea of momentum~\cite{moment}, inspired by Newton's first law of motion, is used to handle the oscillations of SGD. SGD with momentum~\cite{IJGG13} achieves the faster convergence rate and better optimization results compared with the original SGD. In gradient descent based optimization, training efficiency is also greatly affected by the learning rate. 
AdaGrad~\cite{adagrad} is the first optimization algorithm with adaptive learning rates, which makes use of the learning rate decay. AdaDelta~\cite{adadelta} subsequently improves AdaGrad to avoid the extremely small learning rates. {\adam}~\cite{ADAM} involves both adaptive learning~\cite{sgd_overview} and momentum~\cite{moment} and utilizes the exponential decay rate $\beta_1$ (momentum weight) to accelerate the convergence in the relevant directions and dampen oscillations. However, the decay rate $\beta_1$ of the first-order momentum $\mb{m}_t$ in {\adam} is a fixed number, and the selection of the hyperparameter $\beta_1$ may affect the performance of {\adam} greatly. Commonly, $\beta_1 = 0.9$ is the most widely used parameter as introduced in~\cite{ADAM}, but there is still no theoretical evidence proving its advantages. 

During the optimization process, it is common that there exist errors in some update steps. These errors can be caused by the inappropriate momentum calculation, and then lead to slower convergence or oscillations. For each parameter updates, using the fixed momentum weight fails to take the different influence of the current gradient into consideration, which will render errors in momentum computing. For example, when there exist parts of opposite eigen components~\cite{moment} between the continuous two parameter updates (we regard this situation as an error), the current gradient should be assigned a larger weight to correct the momentum in the last update, instead of being placed with a fixed influence. 
We will illustrate this problem through cases in Section~\ref{subsubsec: beta_motivation} where {\adam} with a fixed weight $\beta_1$ cannot handle some simple but intuitive convex optimization problems. Based on this situation, we need to control the influence of momentum by an adaptive weight. What's more, designing hyperparameter-free optimization algorithms has been a very important research problem in recent years, controlling the number of hyperparameters will not only stabilize the performance of the optimization algorithm, but also release the workload of hyperparameters tuning. 

In this paper, we introduce a novel optimization algorithm, namely {\our} (\textbf{D}iscriminative w\textbf{E}ight on \textbf{A}daptive \textbf{M}omentum) to deal with the aforementioned problems. {\our} proposes an adaptive momentum weight $\beta_{1,t}$, which will be learned and updated in each training iteration automatically. 
Besides, {\our} employs a novel backtrack term $d_t$, which will restrict redundant updates when {\our} decides that the correction of the previous step is needed.
We also provide the theoretic analyses about the adaptive momentum weight along with extensive experiments. Based on them, we verify that the adaptive momentum term weight $\beta_{1,t}$ and the operation of backtrack term $d_t$ can be crucial for the performance of the learning algorithms.

Here, we summarize the detailed learning mechanism of {\our} as follows:
\begin{itemize}
	\item {\our} computes adaptive momentum weight $\beta_{1,t}$ based on the ``discriminative angle" $\theta$ between the historical momentum and the newly calculated gradient.
	\item {\our} introduces a novel backtrack term, i.e., $d_t$, which is proposed to correct the redundant update of the previous training epoch if it is necessary. 
	The calculation of $d_t$ is also based on the discriminative angle $\theta$.
	\item {\our} involves fewer hyperparameters than the {\adam} during the training process, which can decrease the workload of hyperparameter tuning.
\end{itemize}

Detailed information about the learning mechanism and the concepts mentioned above will be described in the following sections. This paper will be organized as follows. In Section 2, we will cover some related works about widely used optimization algorithms. In Section 3, we will analyze more detail of our proposed algorithm, whose theoretic convergence rate will also be studied. Extensive experiments will be exhibited in Section 4. Finally, we will give a conclusion of this paper in Section 5.
\section{Related Works} \label{sec:relatedwork}
\noindent\textbf{Stochastic Gradient Descent}: Stochastic gradient descent (SGD)~\cite{sgd_overview,bianchi2012convergence} performs parameter updating for each training example and label. The advantages of SGD include fast converging speed compared with gradient descent and preventing redundancy~\cite{sgd_overview}.~\cite{on_variance_reduction} use the variance reduction methods to accelerate the training process of SGD.

\noindent\textbf{Adaptive Learning Rates}: To overcome the problems brought by the unified learning rate, some variant algorithms applying adaptive learning rate~\cite{behera2006adaptive} have been proposed, such as AdaGrad~\cite{adagrad}, {\adadelta}~\cite{adadelta}, RMSProp~\cite{rmsprop}, {\adam}~\cite{ADAM} and recent ESGD~\cite{dauphin2015equilibrated}, AdaBound~\cite{luo2019adaptive}. {\adagrad} adopts different learning rates to different variables. One drawback of AdaGrad is that with the increasing of iteration number $t$, the adaptive term may inflate continuously, which leads to a very slow convergence rate in the later stage of the training process. {\rmsprop} can solve this problem by using the moving average of historical gradients.

\noindent\textbf{Momentum}: Momentum~\cite{moment,impor_mom_deeplearning,li2017convergence,dozat2016incorporating,mitliagkas2016asynchrony} is a method that helps accelerate SGD in the relevant direction and prevent oscillations on the descent route. The momentum accelerates updates for dimensions whose gradients are in the same direction as historical gradients, and decelerates updates for dimensions whose gradients are the opposite. Momentum is also applied to Nesterov accelerated gradient (NAG)~\cite{nag}, which provides the momentum term with the estimating next position capability. Instead of using the current location to calculate the gradient, NAG first approximates the next position of the variables, then uses the approximated future position to compute the gradient.
{\adam}~\cite{ADAM,normalized_direction} is proposed based on momentum and adaptive learning rates for different variables. {\adam} records the first-order momentum and the second-order momentum of the gradients using the moving average, and further computes the bias-corrected version of them. Based on {\adam}, Keskar et al.~\cite{switch_from_adam_sgd} proposes to switch from {\adam} to SGD during the training process. {\ams}~\cite{amsgrad} is a modified version of {\adam}, which redefines second-order momentum by a maximum function. 


\section{Proposed Algorithm}\label{sec:method} 
Our proposed algorithm {\our} is presented in Algorithm~\ref{alg:cwm}. In the algorithm, $f_1, f_2,\dots, f_T$ is a sequence of loss functions computed with the training mini-batches in different iterations (or epochs). 
\begin{algorithm}[tb]
	\footnotesize
	\caption{{\our} Algorithm} 
	\hspace*{0.02in} {\bf Input:} 
	loss function $f(\mb{w})$ with parameters $\mb{w}$; learning rate $\{\eta_t\}^T_{t=1}$; $\beta_2 = 0.999$\\
	\hspace*{0.02in} {\bf Output:} trained parameters
	\begin{algorithmic}[1]
		\State $\mb{m}_0 \gets \mb{0}$\enspace /* Initialize first-order momentum */
		\State $\mb{v}_0 \gets \mb{0}$, $\mb{\hat{v}}_0 \gets \mb{0}$\enspace /* Initialize second-order momentum */
		\For {$t = 1,2,\dots , T$}
		\State $\mb{g}_t = \nabla f_t(\mb{w}_t)$,
		\vspace{0.02in}
		\State $\theta = \left\langle\frac{\mb{m}_{t-1}}{\sqrt{\mb{\hat{v}}_{t-1}}}, \mb{g}_t\right\rangle$ \enspace /* The operator $\left\langle\cdot ,\cdot \right\rangle$ represents the angle between two vectors. */
		\If {$\theta \in [0,\frac{\pi}{2})$}
		\State $\beta_{1,t} = \sin\theta / K + \epsilon$
		\Else
		\State $\beta_{1,t} = 1 / K$ \enspace /* Here, $K = \frac{10(2+\pi)}{2\pi}$. */
		\EndIf
		\vspace{0.02in}
		\State $\mb{m}_t = (1-\beta_{1,t})\cdot\mb{m}_{t-1} + \beta_{1,t}\cdot\mb{g}_t$
		\vspace{0.02in}
		\State $\mb{v}_t = \beta_2\cdot\mb{v}_{t-1} + (1-\beta_2)\cdot\mb{g}_t\odot\mb{g}_t$\enspace /* $\odot$ is element-wise multiplication.*/
		\vspace{0.02in}
		\State $\mb{\hat{v}}_t = \max\{\mb{\hat{v}}_{t-1}, \mb{v}_t\}$
		\State $d_t = \min\{0.5\cos\theta, 0\}$
		\vspace{0.02in}
		\State $\mb{\Delta}_t = d_t\cdot\mb{\Delta}_{t-1} - \eta_t\cdot\frac{\mb{m}_t}{\sqrt{\mb{\hat{v}}_t}}$
		\State $\mb{w}_t = \mb{w}_{t-1} + \mb{\Delta}_t$
		\EndFor 
		
		\State \Return $\mb{w}_T$
	\end{algorithmic}\label{alg:cwm}
\end{algorithm}
{\our} introduces two new terms in the learning process: (1) the adaptive momentum weight $\beta_{1,t}$
, and (2) the``backtrack term" $d_{t}$. In the $t_{th}$ training iteration, both $\beta_{1,t}$ and $d_t$ are calculated based on the ``discriminative angle" $\theta$, which is the angle between previous $\mb{m}_{t-1}/\sqrt{\mb{\hat{v}}_{t-1}}$ and current gradient $\mb{g}_t$ (since essentially both $\mb{m}_{t-1}/\sqrt{\mb{\hat{v}}_{t-1}}$ and $\mb{g}_t$ are vectors, there exists an angle between them). Here, $\mb{m}$ is the first-order momentum that records the exponential moving average of historical gradients; $\mb{v}$ is the exponential moving average of the squared gradients, which is called the second-order momentum. In the following parts of this paper, we will denote $\mb{m}_{t-1}/\sqrt{\mb{\hat{v}}_{t-1}}$ as the ``update volume'' in the ${(t-1)}_{th}$ iteration. 
Formally, $\beta_{1,t}$ determines the weights of previous first-order momentum $\mb{m}_{t-1}$ and current gradient $\mb{g}_t$ when calculation the present $\mb{m}_t$. Meanwhile, the backtrack term $d_t$ represents the returning step of the previous update on parameters. We can notice that in each iteration, after the $\theta$ has been calculated, the $\beta_{1,t}$ and $d_t$ are directly obtained according to the $\theta$. 
In this way, we can calculate appropriate $\beta_{1,t}$ as the discriminative angle changes. The $d_t$ term balances between the historical update term $\mb{\Delta}_{t-1}$ (defiend in Algorithm~\ref{alg:cwm}) and the current update volume $\mb{m}_t /\sqrt{\mb{\hat{v}}_t}$ when computing $\mb{\Delta}_t$. In the proposed {\our}, $\beta_{1,t}$ and $d_t$ terms can collaborate with each other and achieve faster convergence.

\subsection{Adaptive Momentum Weight $\beta_{1,t}$}
\subsubsection{Motivation}\label{subsubsec: beta_motivation}
In the {\adam}~\cite{ADAM} paper, (the first-order) momentum's weight (i.e., $\beta_1$) is a pre-specified fixed value, and commonly $\beta_1 = 0.9$. It has been used in many applications and the performance can usually meet the expectations. However, this setting is not applicable in some situations. For example, for the case 
\begin{equation}
f(x,y) = x^2 + 4y^2 ,
\end{equation}
where $x$ and $y$ are two variables, it is obvious that $f$ is a convex function. If $f(x,y)$ is the objective function to optimize, we try to use {\adam} to find its global optima.
\begin{figure}[t]
	\vspace{-20pt}
	\centering
	\includegraphics[width=6cm,height=3.75cm]{./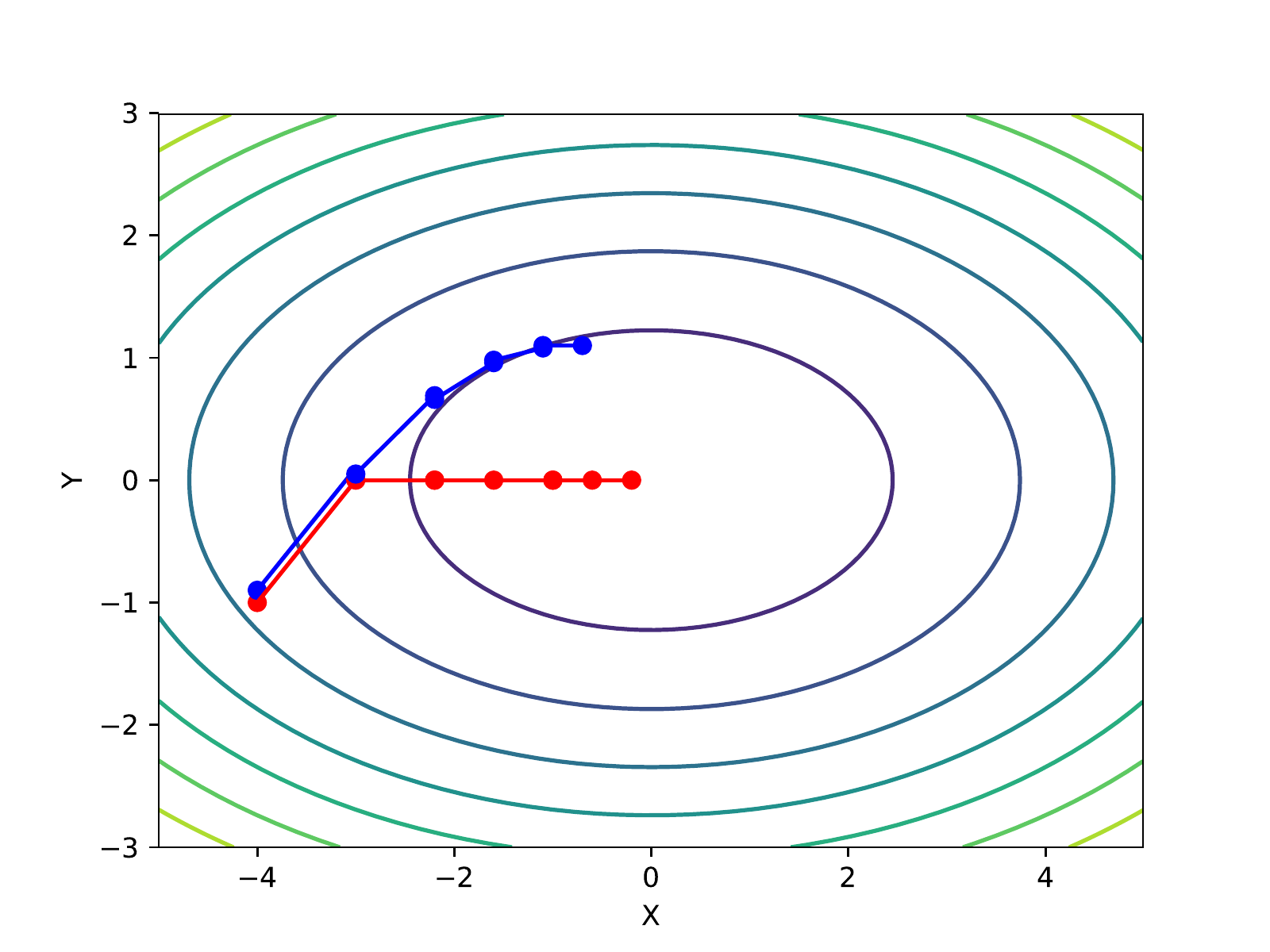}
	\caption{The update routes of {\adam} with $\beta_1 = 0.9$ (the blue line) and $\beta_1 = 0.0$ (the red line).}
	\label{fig:counter-example}
\end{figure}

\begin{figure*}[t]
	\centering
	\subfigure[The "zig-zag" route]{\label{fig:zigzag}
		\begin{minipage}[l]{0.6\columnwidth}
			\centering
			\includegraphics[width=0.8\textwidth,height=3cm]{./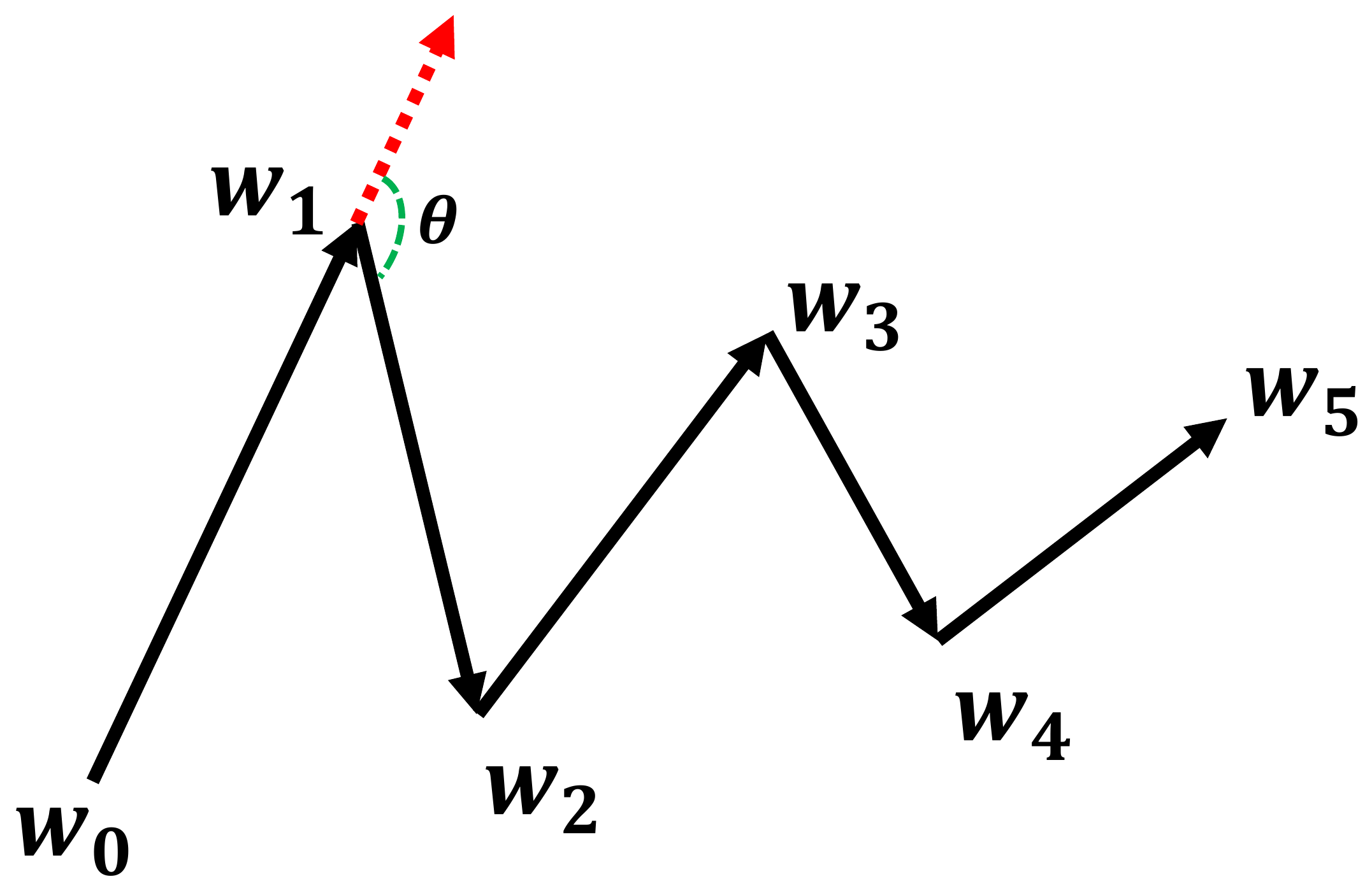}
		\end{minipage}
	}
	\subfigure[Axis decomposition]{\label{fig:axis_decomp}
		\begin{minipage}[l]{0.6\columnwidth}
			\centering
			\includegraphics[width=0.8\textwidth,height=3cm]{./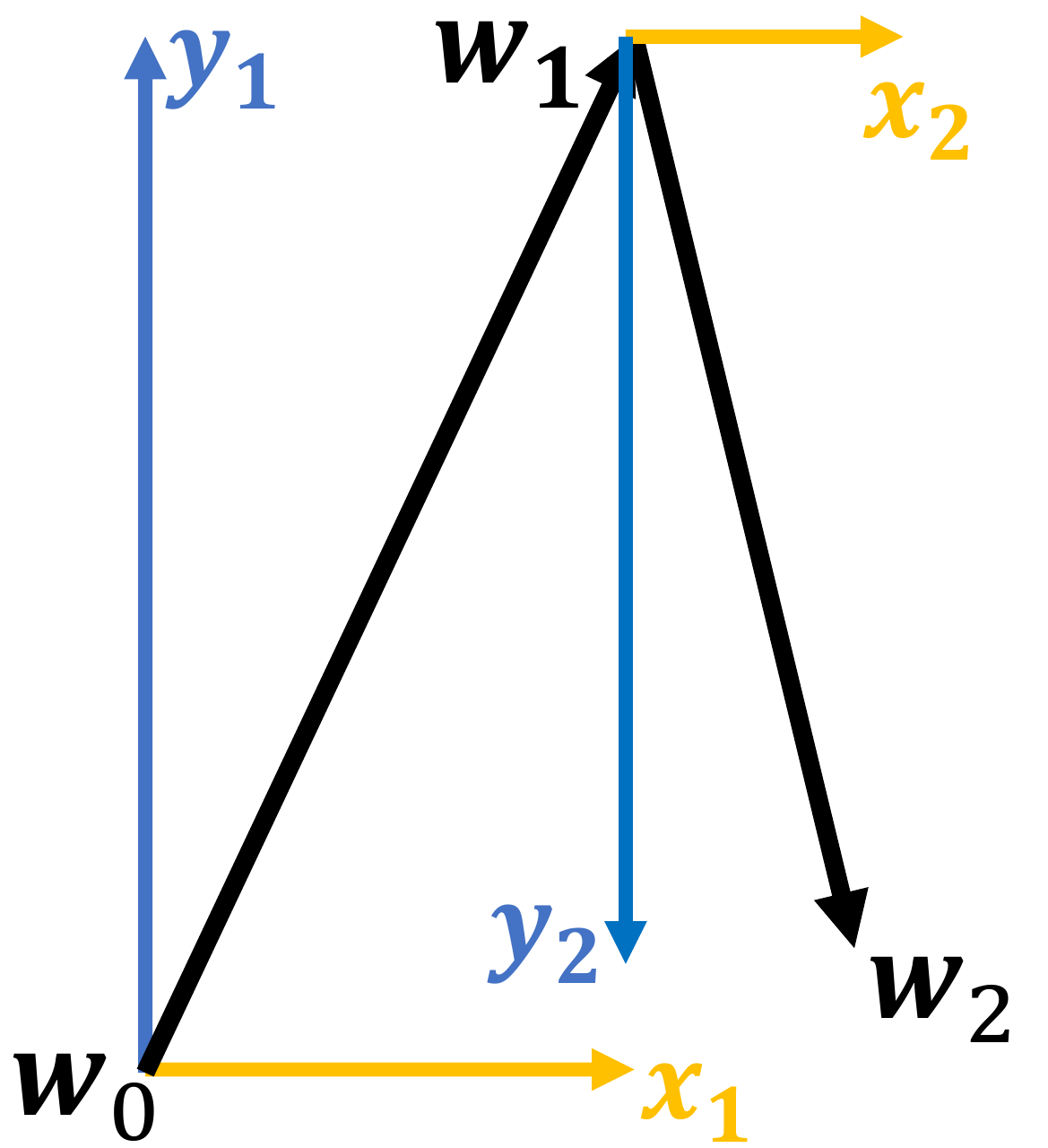}
		\end{minipage}
	}
	\subfigure[Example: When $\theta = 180^\circ$]{\label{fig:theta180}
		\begin{minipage}[l]{0.6\columnwidth}
			\centering
			\includegraphics[width=0.8\textwidth,height=3cm]{./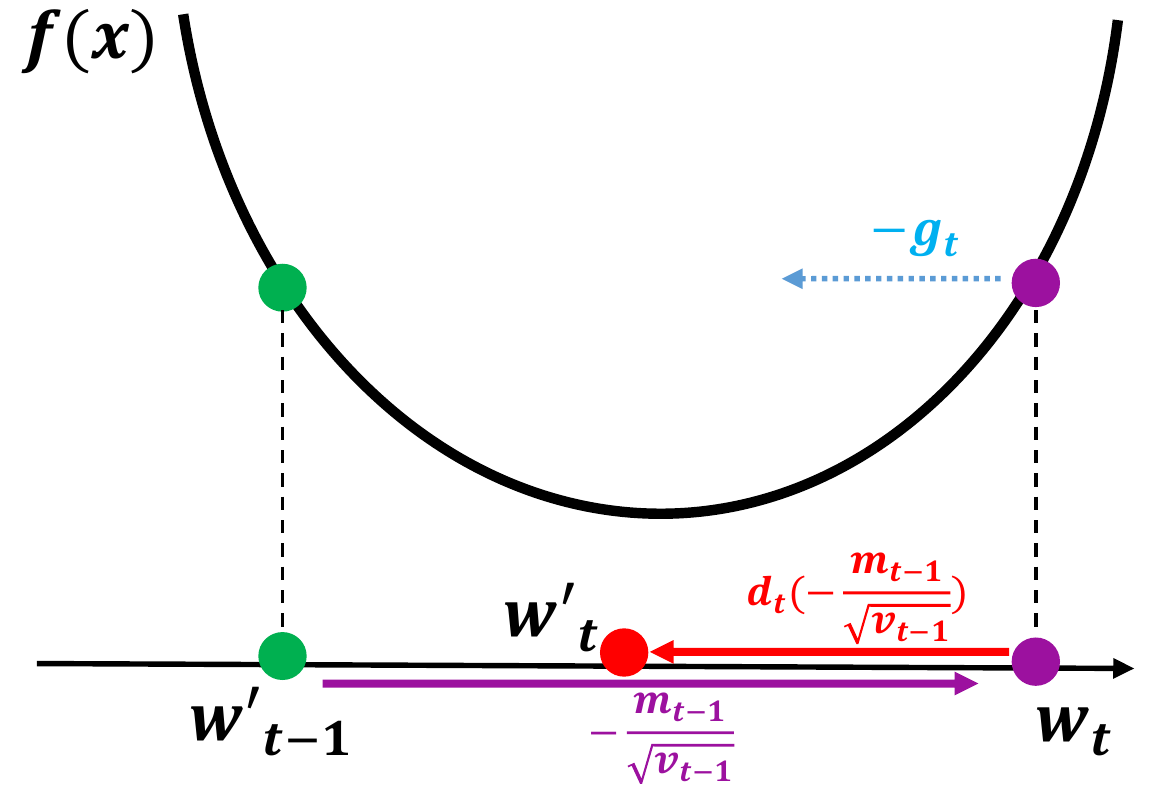}
		\end{minipage}
	}
	\caption{Some examples about $d_t$}
\end{figure*}
Let's assume ADAM starts the variable search from $(-4,-1)$ (i.e., the initial variable vector is $\mb{w}_0 = (-4,-1)^\top$) and the initial learning rate is $\eta_1 = 1$. Different choices of $\beta_1$ will lead to very different performance of ADAM. For instance, in Figure~\ref{fig:counter-example}, we illustrate the update routes of ADAM with $\beta_1 = 0.9$ and $\beta_1 = 0.0$ as the blue and red lines, respectively. In Figure~\ref{fig:counter-example}, the ellipse lines are the contour lines of $f(x,y)$, and points on the same line share the same function value. We can observe that after the first updating, both of the two approaches will update variables to $(-3, 0)$ point (i.e., the updated variable vector will be $\mb{w}_1 = (-3, 0)^\top$). In the second step, since the current gradient $\mb{g}_2 = (-6, 0)^\top$, the {\adam} with $\beta_1 = 0.0$ will update variables in the $(1, 0)$ direction. Meanwhile, for the {\adam} with $\beta_1 = 0.9$, its $\mb{m}_2 $ is computed by integrating $\mb{m}_1$ and $\mb{g}_2$ together (whose weights are $\beta_1$ and $1-\beta_1$, respectively). Therefore the updating direction of it will be more inclined to the previous direction instead. Compared with ADAM with $\beta_1 = 0.0$, the ADAM with $\beta_1 = 0.9$ takes much more iterations until converging.

From the analysis above, we can observe that, a careful tuning and updating of $\beta_1$ in the learning process can be crucial for the performance of {\adam}. However, by this context so far, there still exist no effective approaches for guiding the parameter tuning yet. To deal with this problem, {\our} introduces the concept of discriminative angle $\theta$ for computing $\beta_1$ automatically as follows.

\subsubsection{Mechanism}
The momentum weight $\beta_1$ will be updated in each iteration in {\our}, and we can denote its value computed in the $t_{th}$ iteration as $\beta_{1,t}$ formally.
Essentially, in the $t_{th}$ iteration of the training process, both the previous update volume 
and $\mb{g}_t$ are vectors (or directions), and these directions directly decide the updating process. Thus we try to extract their relation with the help of angle, and subsequently determine the weight $\beta_{1,t}$ (or $1 - \beta_{1,t}$) by the angle. 

In Algorithm~\ref{alg:cwm}, the discriminative angle $\theta$ in the $t_{th}$ iteration is calculated by
\begin{small}
	\begin{equation}
	\theta = \left\langle -\frac{\mb{m}_{t-1}}{\sqrt{\mb{\hat{v}}_{t-1}}}, -\mb{g}_t \right\rangle = \left\langle\frac{\mb{m}_{t-1}}{\sqrt{\mb{\hat{v}}_{t-1}}}, \mb{g}_t\right\rangle
	\end{equation}
\end{small}
Here, the operator $\left\langle\cdot, \cdot\right\rangle$ denotes the angle between two vectors. This expression is easy to understand, since the $-\mb{m}_{t-1} / \sqrt{\mb{\hat{v}}_{t-1}}$ can represent the updating direction of $(t-1)_{th}$ iteration in {\ams}, meanwhile $-\mb{g}_t$ is the reverse of the present gradient. So we can simplify it as $\theta = <\frac{\mb{m}_{t-1}}{\sqrt{\mb{\hat{v}}_{t-1}}}, \mb{g}_t>$. If $\theta$ is close to zero (denoted by $\theta\rightarrow 0^\circ$), the $\mb{m}_{t-1} / \sqrt{\mb{\hat{v}}_{t-1}}$ (previous update volume) and $\mb{g}_t$ are almost in the same direction, and the weights for them will not be very important. Meanwhile, if $\theta$ approaches $180^\circ$ (denoted by $\theta\rightarrow 180^\circ$), the previous update volume and $\mb{g}_t$ will be in totally reverse directions. This means in the current step, the previous momentum term is already in a wrong direction. Therefore, to rectify this error of the last momentum, {\our} proposes to assign the current gradient's weight (i.e., $\beta_{1,t}$ in our paper) with a larger value instead. As the $\beta_{1,t}$ varies when $\theta$ changes from $0^\circ$ to $180^\circ$, we intend to define $\beta_{1,t}$ with the following equation:
\begin{small}
	\begin{equation}
	\beta_{1,t} =  \begin{cases}
	\sin \ K + \epsilon &  \mbox{$\theta \in [0, \frac{\pi}{2})$} \\
	1 / K & \mbox{$\theta\in [\frac{\pi}{2}, \pi]$}
	\end{cases}
	\end{equation}\label{equ:beta_1}
\end{small}
where $K = 10(2+\pi)/2\pi$ and $\epsilon$ is a very small value (e.g., $\epsilon = 0.001$). 
In the equation above, the threshold of the piecewise function is $\theta = \pi/2$, because $\sin\theta$ comes to the maximum at this point and goes down when $\theta > \frac{\pi}{2}$. If $\frac{\pi}{2}\leq \theta\leq \pi$, which is exactly the situation $\theta\rightarrow 180^\circ$ we discussed above, we intend to keep $\beta_{1,t}$ in a relatively large value. The reason we rescale $\sin\theta$ by ${1}/{K}$ is that directly applying $\beta_{1,t}= \sin\theta$ will overweight $\mb{g}_t$, which may cause fluctuations on the update routes. The value of $K$ is determined by:
\begin{small}
	\begin{equation}
	K = 10(\int_{0}^{\frac{\pi}{2}} \sin\theta d\theta + \int_{\frac{\pi}{2}}^{\pi} 1 d\theta) = \frac{10(2+\pi)}{2\pi}
	\end{equation}\label{equ:K}
\end{small}
In the equation above, assume $\theta$ is randomly distributed on $[0, \pi]$, in this calculation we can get 
\begin{small}
	\begin{equation}
	\mathbb{E}[\beta_{1,t}] = \frac{1}{\pi}\int_{0}^{\pi} \beta_{1,t}(\theta) d\theta = 0.1
	\end{equation}\label{equ:E[beta_1]}
\end{small}
In other words, the expectation of $\beta_{1,t}$ (i.e., $\mathbb{E}(\beta_{1,t})$) will be identical to the $\beta_1$ used in {\adam}~\cite{ADAM}. 
After obtaining $\beta_{1,t}$, it will be applied to calculating $\mb{m}_t$ as shown in Algorithm~\ref{alg:cwm}. In this way, we have achieved momentum with adaptive weights, and this weight is automatically computed during the training process, fewer hyperparameters will be involved.

\subsection{Backtrack Term $d_t$}
To further speed up the convergence rate, we employ a novel backtrack mechanism for {\our}. As a mechanism computed based on the discriminative angle $\theta$, the backtrack term allows {\our} to eliminate redundant update in each iteration.
Besides, according to our following analysis, the backtrack term $d_t$ virtually collaborates with the $\beta_{1,t}$ term to further accelerate the convergence of the training process. 
\vspace{-5pt}
\subsubsection{Motivation}\label{sec:motivation}
When optimizer (e.g., {\adam}) updates variables of the loss function (e.g., $f(x, y)$), some update routes will look like the black arrow lines shown in Figure~\ref{fig:zigzag}, especially when the discriminative angle $\theta$ is larger than $90^\circ$. We call this phenomenon the "zig-zag" route. In Figure~\ref{fig:zigzag}, it shows the update routes of a 2-dimension function. Each black arrow line in the figure represents the variables' update in each epoch; the red dashed line is the direction of the update routes; the $\theta$ is the discriminative angle. If $\theta \geq 90^\circ$, the "zig-zag" phenomenon will appear severely, which may lead to slower convergence speed. The main reason is when $\theta \geq 90^\circ$, if we map two neighboring update directions onto the coordinate axes, there will be at least one axis of the directions being opposite. This situation is shown in Figure~\ref{fig:axis_decomp}. For the example of a function with 2-dimension variables, the update volume $\mb{m}_1/\sqrt{\mb{v}_1}$ can be decomposed into $(x_1, y_1)^\top$ in Figure~\ref{fig:axis_decomp}, and the same with $\mb{m}_2/\sqrt{\mb{v}_2}$. We can notice that $y_1$ and $y_2$ are in the opposite directions, so the first and second steps practically have inverse updates subject to the $y$ axis. We attribute this situation to the over update (or redundant update) of the first step. Therefore the backtrack term $d_t$ is proposed to restrict this situation.

\subsubsection{Mechanism}\label{subsubsec:dt_m}
Since the redundant update situation is caused by over updating of the previous iteration, simply we intend to deal with it through a backward step. Meanwhile, during the updating process of variables, not every step will suffer from the redundant update: if $\theta\rightarrow 0^\circ$, the updating process becomes smooth, not like the situation shown in Figure~\ref{fig:zigzag}. Besides, from the analysis above we conclude that if $\theta \geq 90^\circ$, there will be at least one dimension involves the redundant update. Thus, in the $t_{th}$ iteration we quantify $d_t$ as the following equation:
\begin{small}
	\begin{equation}
	d_t = \min\{0.5\cos\theta, 0\}
	\end{equation}\label{equ:d_t}
\end{small}
and we rewrite the updating term with backtrack in {\our} as
\begin{small}
	\begin{equation}
	\mb{\Delta}_t = d_t\cdot\mb{\Delta}_{t-1} - \eta_t\cdot\frac{\mb{m}_t}{\sqrt{\mb{\hat{v}}_t}}
	\end{equation}\label{equ:Delta_t}
\end{small}
where $\theta$ is the discriminative angle and $\mb{\Delta}_t$ is the updating term in Algorithm~\ref{alg:cwm}. By designing $d_t$ in this way, when $\theta\rightarrow 0^\circ$, $d_t = 0$ and there is no backward step, the updating term $\mb{\Delta}_t = -\eta_t\cdot\frac{\mb{m}_t}{\sqrt{\mb{\hat{v}}_t}}$ is similar to {\ams}; when $\theta\rightarrow 180^\circ$, $d_t = 0.5\cos\theta$ and comes to the maximum value when $\theta = 180^\circ$. The reason that $\cos\theta$ is rescaled by $0.5$ is that: in Figure~\ref{fig:theta180}, $\mb{w}_{t-1}$ and $\mb{w}_t$ are the variables updated by {\our} without $d_t$ term in the $(t-1)_{th}$ and $t_{th}$ iterations respectively. If the backtrack mechanism is implemented, in the $(t+1)_{th}$ iteration, since $\theta = 180^\circ$, firstly $d_t = 0.5\cos\theta \to -0.5$ makes the backtrack to the $\mb{w'}_t$ point (the middle point of $\mb{w}_{t-1}$ and $\mb{w}_t$). Thus, this backtrack step allows the variable to further approach the optima.

By implementing the backtrack term $d_t$, {\our} can combine it with the adaptive momentum weight $\beta_{1,t}$ to achieve the collaborating of them. For the situation of large discriminative angle ($\theta > 90^\circ$), both $\beta_{1,t}$ and $d_t$ in the current step can make corrections to the last update. Since when $\theta > 90^\circ$, the last update is in conflict direction compared with the current gradient, and  $\beta_{1,t}$ will increase to allocate a large weight for the present gradient, which subsequently corrects the previous step. Meanwhile, the $d_t$ will also conduct a backward step of to further rectify the last update.

\begin{figure*}[t]
	\centering
	\subfigure[Train loss on ORL]{
		\begin{minipage}[l]{0.48\columnwidth}
			\centering
			\includegraphics[width=0.85\textwidth]{./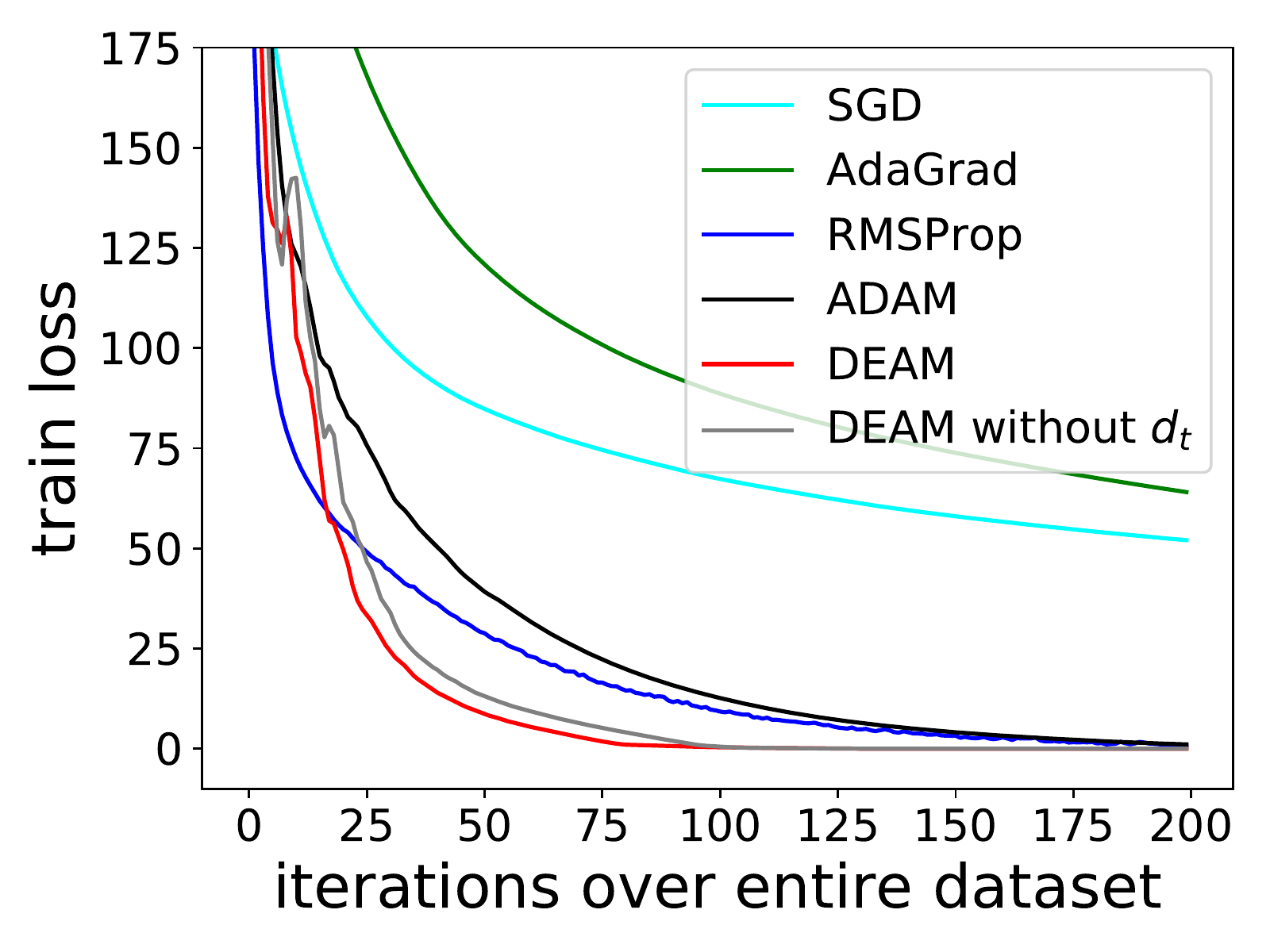}
		\end{minipage}
		\label{fig:orl_1}
	}
	\subfigure[Test loss on ORL]{
		\begin{minipage}[l]{0.48\columnwidth}
			\centering
			\includegraphics[width=0.85\textwidth]{./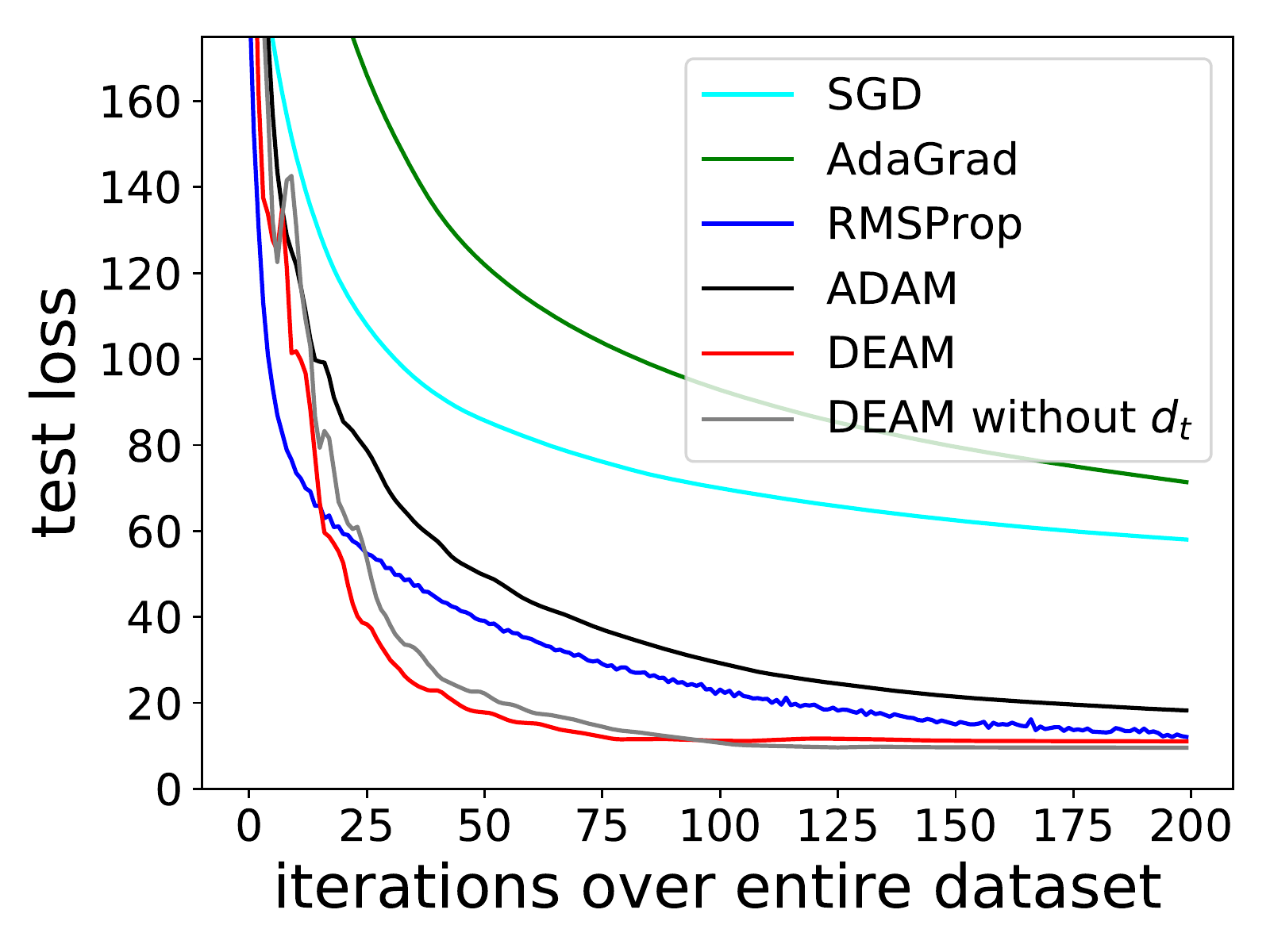}
		\end{minipage}
		\label{fig:orl_2}
	}
	\subfigure[Train loss on MNIST]{ \label{fig:mnist_train}
		\begin{minipage}[l]{0.48\columnwidth}
			\centering
			\includegraphics[width=0.85\textwidth]{./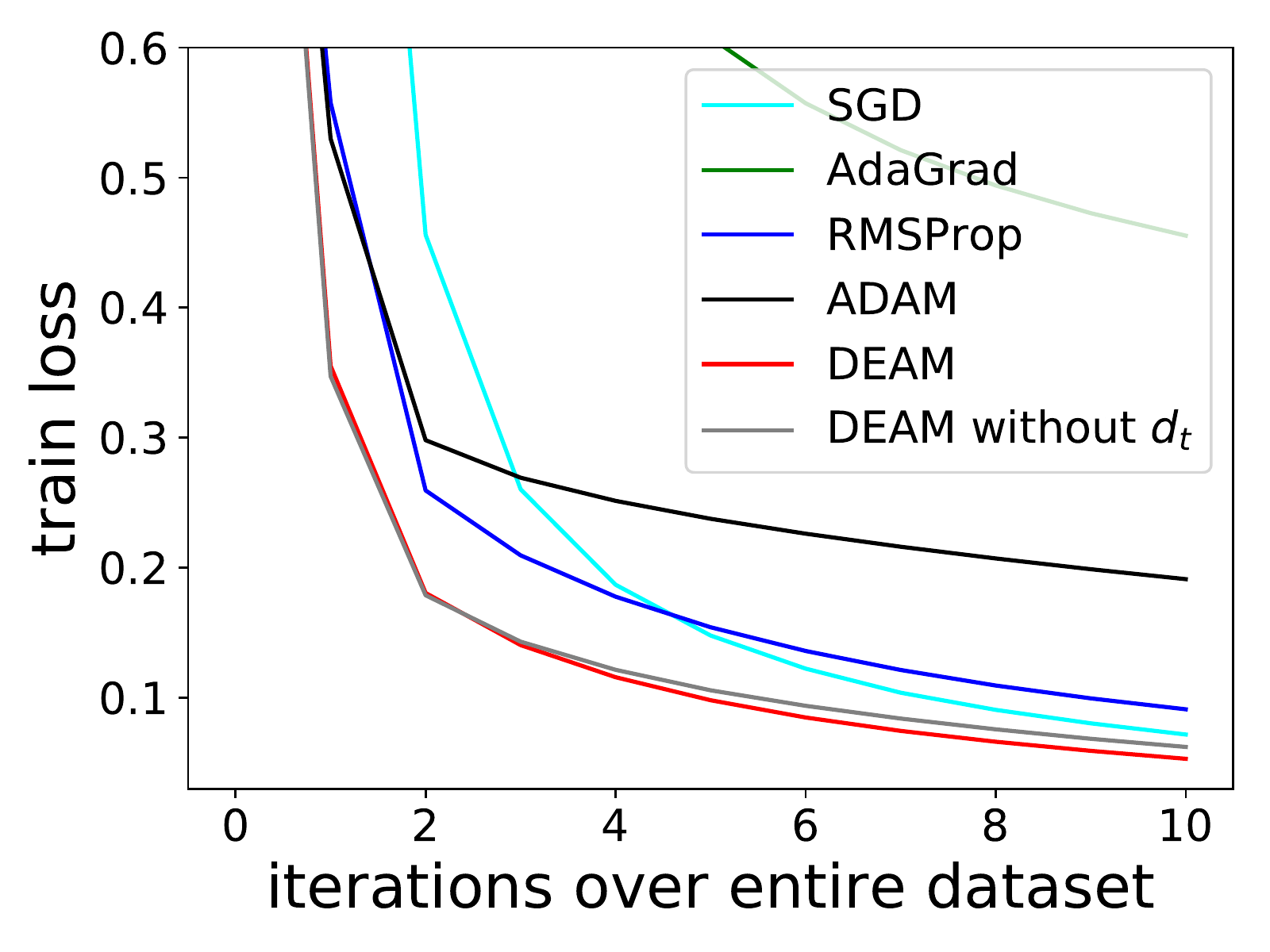}
		\end{minipage}
		\label{fig:dnn_1}
	}
	\subfigure[Test loss on MNIST]{ \label{fig:mnist_test}
		\begin{minipage}[l]{0.48\columnwidth}
			\centering
			\includegraphics[width=0.85\textwidth]{./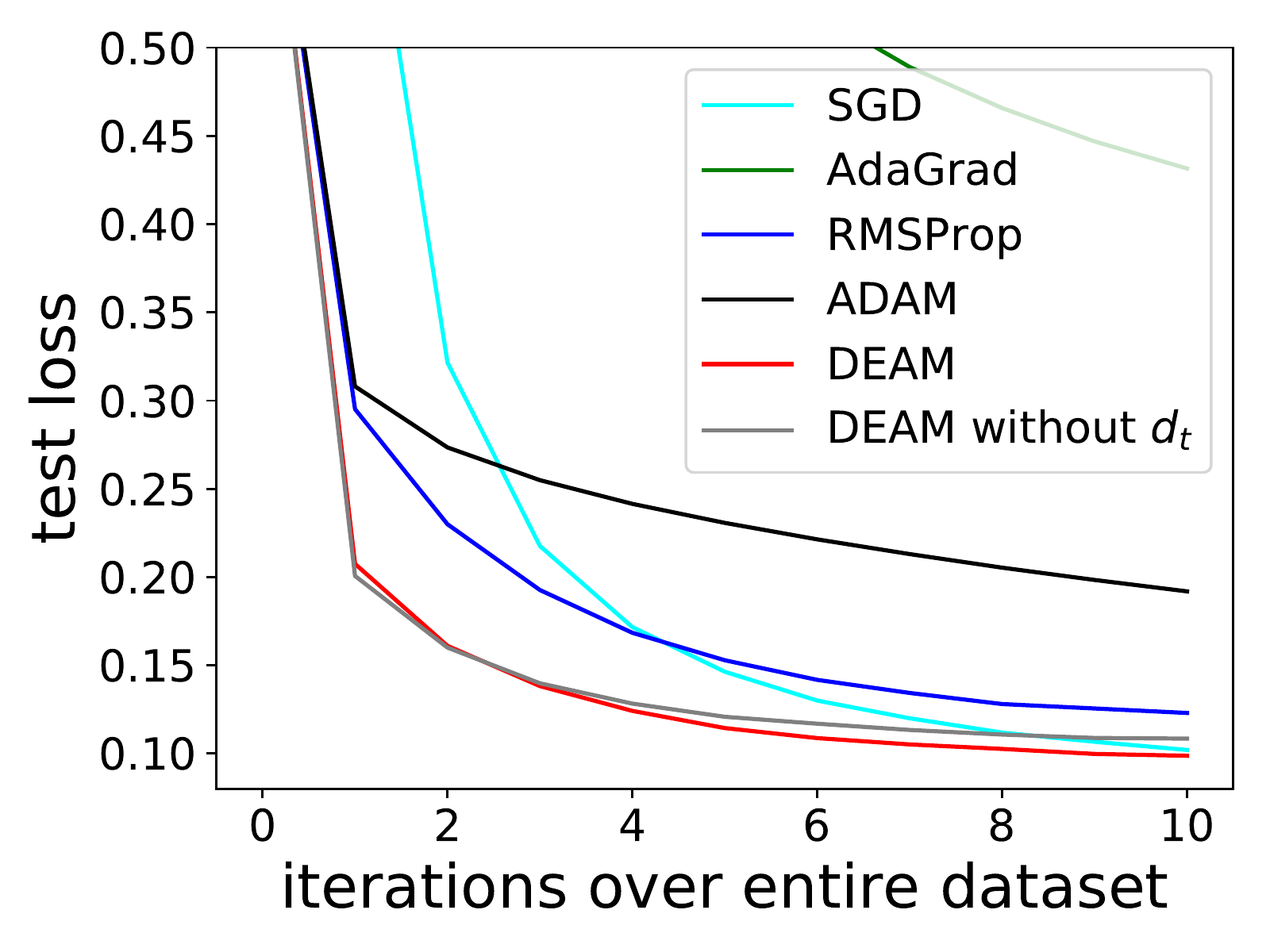}
		\end{minipage}
		\label{fig:dnn_2}
	}
	\caption{Results of Logistic Regression and DNN structures}\label{fig:dnn}
\end{figure*}
\subsection{Theoretical Analysis}\label{sec: convergence_analysis}
In this part, we give the detailed analysis on the convergence of our  {\our} algorithm. According to~\cite{ADAM,amsgrad,online_convex_program,adafom}, given an arbitrary sequence of convex objective functions $f_1(\mb{w}), f_2(\mb{w}),\dots, f_T(\mb{w})$, we intend to evaluate our algorithm using the regret function, which is denoted as:
\begin{small}
	\begin{equation}
	R(T) = \sum_{t=1}^{T}[f_t(\mb{w}_t) - f_t(\mb{w}^*)]
	\end{equation}\label{equ:Delta_t}
\end{small}
where $\mb{w}^*$ is the globally optimal point. In the following Theorem~\ref{thm:converge}, we will show that the above regret function is bounded. Before proving the Theorem~\ref{thm:converge}, there are some definitions and lemmas as the pre-requisites.
\begin{defn} \label{def:convex}
	If a function $f: \mathbb{R}^d \rightarrow \mathbb{R}$ is convex, then $\forall x, y\in\mathbb{R}^d$, $\forall \phi\in[0,1]$, we have
	\[
	f(\phi x + (1-\phi)y)\leq\phi f(x) + (1-\phi)f(y)
	\]
\end{defn}
\begin{defn}\label{lemma:bound}
	If a function $f: \mathbb{R}^d\rightarrow \mathbb{R}$ is convex, then $\forall x, y\in\mathbb{R}^d$ we have
	\[
	f(y)\geq f(x) + \nabla f(x)^\top (y-x)
	\]
\end{defn}

\begin{lemma}\label{lemma:m_t}
	Assume that the function $f_t$ has bounded gradients, $\left\|\nabla f_t(\mb{w})\right\|_\infty\leq G_\infty$. Let  $\mb{m}_{t, i}$ represents the $i_{th}$ element of $\mb{m}_t$ in {\our}, then the $\mb{m}_{t, i}$ is bounded by
	\[
	\mb{m}_{t,i}\leq \frac{(1-\epsilon_0)G_\infty}{K(1-\lambda)}
	\]
\end{lemma}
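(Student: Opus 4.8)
The plan is to treat the update rule $\mb{m}_t = (1-\beta_{1,t})\cdot\mb{m}_{t-1} + \beta_{1,t}\cdot\mb{g}_t$ as a first-order linear recursion with initial condition $\mb{m}_0 = \mb{0}$, and to unroll it coordinate-wise. Doing so writes the $i$-th component as a single weighted sum over all past gradient entries,
\[
\mb{m}_{t,i} = \sum_{j=1}^{t}\beta_{1,j}\Big(\prod_{l=j+1}^{t}(1-\beta_{1,l})\Big)\,g_{j,i},
\]
which I would justify by a one-line induction on $t$ (the base case $\mb{m}_{1,i}=\beta_{1,1}g_{1,i}$ is immediate, and the inductive step just factors out $(1-\beta_{1,t})$). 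This reduces the lemma to estimating the scalar coefficient attached to each gradient, which is the whole content of the claim.

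Next I would invoke the hypothesis $\|\nabla f_t(\mb{w})\|_\infty \le G_\infty$, so that $|g_{j,i}|\le G_\infty$ for every $j$, take absolute values, and pull $G_\infty$ out of the sum. The remaining task is to control the two families of factors created by the \emph{time-varying} weight. From the piecewise definition of $\beta_{1,t}$ (namely $\sin\theta/K+\epsilon$ on $[0,\tfrac{\pi}{2})$ and $1/K$ on $[\tfrac{\pi}{2},\pi]$, together with $\sin\theta\le 1$) one extracts a uniform upper bound on the weight of the current gradient, of the form $(1-\epsilon_0)/K$, which becomes the numerator factor. Simultaneously, the strictly positive lower bound on $\beta_{1,t}$ guaranteed by that same formula yields $1-\beta_{1,l}\le\lambda<1$ for a single constant $\lambda$, so each product term decays geometrically, $\prod_{l=j+1}^{t}(1-\beta_{1,l})\le \lambda^{\,t-j}$.

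Finally I would sum the geometric series $\sum_{j=1}^{t}\lambda^{\,t-j}\le\sum_{k\ge 0}\lambda^{k}=1/(1-\lambda)$ and multiply the three pieces, obtaining
\[
\mb{m}_{t,i}\le G_\infty\cdot\frac{1-\epsilon_0}{K}\cdot\frac{1}{1-\lambda}=\frac{(1-\epsilon_0)G_\infty}{K(1-\lambda)},
\]
which is exactly the asserted bound. (Note this is a deliberately loose estimate: the telescoping identity $\sum_{j}\beta_{1,j}\prod_{l>j}(1-\beta_{1,l})=1-\prod_l(1-\beta_{1,l})\le 1$ would already give $\mb{m}_{t,i}\le G_\infty$, but the stated form is the one needed as a building block for the regret analysis of Theorem~\ref{thm:converge}.)

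The main obstacle, and the reason this departs from the textbook {\adam}/{\ams} estimate, is precisely that $\beta_{1,t}$ is no longer a fixed hyperparameter but changes every step through the discriminative angle $\theta$. In the constant-weight case the unrolled coefficients form an exact geometric series and the bound is mechanical; here I must instead secure \emph{uniform} two-sided control on $\beta_{1,t}$ (a maximum near $1/K$ and a strictly positive minimum) so that the products $\prod(1-\beta_{1,l})$ still decay at a single rate $\lambda<1$ and the series stays summable. Pinning down that uniform $\lambda$ from the piecewise formula, and verifying that the per-step weight factor never lets the geometric bound degrade, is the only delicate point; everything else is the routine induction and geometric-sum estimate sketched above.
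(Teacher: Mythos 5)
Your proposal follows essentially the same route as the paper: unroll the recursion coordinate-wise into $\mb{m}_{t,i}=\sum_{j}\beta_{1,j}\prod_{l>j}(1-\beta_{1,l})g_{j,i}$, bound $|g_{j,i}|$ by $G_\infty$, bound $\beta_{1,j}$ via its piecewise definition, bound the products geometrically by $\lambda^{t-j}$, and sum the series. The only difference is bookkeeping of the constant $(1-\epsilon_0)$ — you attach it to the upper bound on $\beta_{1,j}$ (which, strictly read, slightly overstates since $\beta_{1,j}$ can equal $1/K$ or $\sin\theta/K+\epsilon$), whereas the paper extracts it from the decay of $\prod(1-\epsilon)$ using $\epsilon=1-(1-\epsilon_0)\lambda^{t-1}$; both versions are equally loose at that step and arrive at the same stated bound.
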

\begin{proof}
	Let $g_t = \nabla f_t(\mb{w})$.
	According to the definition of $\mb{m}_{t,i}$ in our algorithm,
	{\scriptsize
		\begin{align*}
		\mb{\mb{m}}_{t,i} &= \sum_{j=1}^{t}\beta_{1,j}\prod_{l=1}^{t-j}(1-\beta_{1,t-l+1})g_{j,i}\\
		&\leq \frac{G_\infty}{K}\sum_{j=1}^{t}\prod_{l=1}^{t-j}(1-\epsilon)
		\leq \frac{G_\infty}{K}\sum_{j=1}^{t}(1-\epsilon_0)\lambda^{t-j}\leq \frac{(1-\epsilon_0)G_\infty}{K(1-\lambda)}
		\end{align*}
	}
	where $K$ and $\epsilon$ are the terms in Algorithm~\ref{alg:cwm}.
\end{proof}
For the following proof, $\mb{g}_t := \nabla f_t(\mb{w}_t)$ and $\mb{g}_{t, i}$ will represent the $i_{th}$ element of $\mb{g}_t\in\mathbb{R}^d$, and $\mb{g}_{1:t, i} = [\mb{g}_{1,i},\mb{g}_{2,i},\dots ,\mb{g}_{t,i}]$.
\begin{thm}\label{thm:converge}
	Assume $\{f_t\}_{t=1}^T$ have bounded gradients $\left\|\nabla f_t(\mb{w})\right\|_\infty\leq G_\infty$ for all $\mb{w}\in \mathbb{R}^d$, all variables are bounded by $\left\|\mb{w}_p - \mb{w}_q\right\|_2\leq D$ and $\left\|\mb{w}_p - \mb{w}_q\right\|_\infty\leq D_\infty$, $\forall p,q \in \{1,2,\dots,T\}$, $\eta_t = \eta/\sqrt{t}$, $\gamma_1 = (1-\epsilon_0) / \sqrt{\beta_2}$ and satisfies $\gamma_1<1$, $\epsilon = 1-(1-\epsilon_0)\lambda^{t-1}, \lambda\in(0,1)$. Our proposed algorithm can achieve the following bound on regret:
	{\scriptsize
		\begin{align*}
		R(t)&\leq \frac{D^2}{\epsilon_0\eta}\sum_{i=1}^{d}\sqrt{T\mb{\hat{v}}_{T,i}}
		+ \frac{(1-\epsilon_0)^2G_\infty D_\infty d}{K(1-\lambda)^2\epsilon_0}\\
		&+ \frac{\eta \sqrt{1+\log T}}{2\epsilon^2_0(1-\gamma_1)\sqrt{1-\beta_2}}\sum_{i=1}^{d}\left\|\mb{g}_{1:T,i}\right\|_2
		\end{align*}
	}
\end{thm}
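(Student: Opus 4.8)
The plan is to follow the online-convex-optimization template used for \ams~\cite{amsgrad}, treating \our's adaptive weight $\beta_{1,t}$ and backtrack term $d_t$ as the two deviations from the standard argument that need extra care. First I would invoke convexity: by Definition~\ref{lemma:bound}, $f_t(\mb{w}_t)-f_t(\mb{w}^*)\leq \mb{g}_t^\top(\mb{w}_t-\mb{w}^*)=\sum_{i=1}^d \mb{g}_{t,i}(\mb{w}_{t,i}-\mb{w}^*_i)$, so that $R(T)\leq\sum_{t=1}^T\sum_{i=1}^d \mb{g}_{t,i}(\mb{w}_{t,i}-\mb{w}^*_i)$. Everything then reduces to bounding this double sum coordinate by coordinate.

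Second, I would set up a per-coordinate squared-distance recursion. From $\mb{w}_t=\mb{w}_{t-1}+\mb{\Delta}_t$ and $\mb{\Delta}_t=d_t\mb{\Delta}_{t-1}-\eta_t\,\mb{m}_t/\sqrt{\mb{\hat{v}}_t}$, I isolate $\eta_t\,\mb{m}_{t,i}/\sqrt{\mb{\hat{v}}_{t,i}}=d_t\mb{\Delta}_{t-1,i}-(\mb{w}_{t,i}-\mb{w}_{t-1,i})$, so the backtrack enters as the explicit additive piece $d_t\mb{\Delta}_{t-1,i}$ sitting on top of the ordinary \ams update. Substituting the momentum recursion $\mb{m}_t=(1-\beta_{1,t})\mb{m}_{t-1}+\beta_{1,t}\mb{g}_t$ then lets me solve for $\mb{g}_{t,i}(\mb{w}_{t,i}-\mb{w}^*_i)$ as a sum of four pieces: (i) a telescoping distance term $\tfrac{\sqrt{\mb{\hat{v}}_{t,i}}}{2\eta_t\beta_{1,t}}\big[(\mb{w}_{t-1,i}-\mb{w}^*_i)^2-(\mb{w}_{t,i}-\mb{w}^*_i)^2\big]$, (ii) a momentum cross term carrying the factor $(1-\beta_{1,t})\mb{m}_{t-1,i}$, (iii) a quadratic term $\tfrac{\eta_t}{2\beta_{1,t}}\mb{m}_{t,i}^2/\sqrt{\mb{\hat{v}}_{t,i}}$, and (iv) a backtrack remainder proportional to $d_t\mb{\Delta}_{t-1,i}(\mb{w}_{t,i}-\mb{w}^*_i)$.

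Third, I would sum over $t$ and collect these four pieces into the three terms of the bound. Using the monotonicity $\mb{\hat{v}}_t\geq\mb{\hat{v}}_{t-1}$, the uniform lower bound $\beta_{1,t}\geq\epsilon_0$, and the diameter bound $\|\mb{w}_p-\mb{w}_q\|_2\leq D$, the telescoping piece (i) collapses to $\tfrac{D^2}{\epsilon_0\eta}\sum_i\sqrt{T\mb{\hat{v}}_{T,i}}$, the first term. For the cross term (ii) I would use the hypothesis $\epsilon=1-(1-\epsilon_0)\lambda^{t-1}$, which forces the momentum weight to decay geometrically, $1-\beta_{1,t}\leq(1-\epsilon_0)\lambda^{t-1}$; combined with Lemma~\ref{lemma:m_t}'s bound $\mb{m}_{t,i}\leq(1-\epsilon_0)G_\infty/[K(1-\lambda)]$ and $\|\mb{w}_p-\mb{w}_q\|_\infty\leq D_\infty$, the resulting geometric series sums to the constant middle term. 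The backtrack remainder (iv) I would control with $|d_t|\leq\tfrac12$, bounding $|\mb{\Delta}_{t-1,i}|$ through the same $\mb{m}$ estimate and $|\mb{w}_{t,i}-\mb{w}^*_i|\leq D_\infty$, so it too is absorbed into a constant of the same order. Finally, the quadratic piece (iii), with prefactor $1/(2\beta_{1,t})\leq 1/(2\epsilon_0)$, is handled by the standard \ams-type auxiliary estimate $\sum_{t=1}^T \eta_t\,\mb{m}_{t,i}^2/\sqrt{\mb{\hat{v}}_{t,i}}\leq \tfrac{\eta\sqrt{1+\log T}}{\epsilon_0(1-\gamma_1)\sqrt{1-\beta_2}}\|\mb{g}_{1:T,i}\|_2$, which expands $\mb{m}_{t,i}$ via the geometric weights of Lemma~\ref{lemma:m_t}, uses $\eta_t=\eta/\sqrt t$ to produce the $\sqrt{1+\log T}$ harmonic factor, and relies on the contraction $\gamma_1=(1-\epsilon_0)/\sqrt{\beta_2}<1$ to sum the geometric tail; the two $\epsilon_0$ factors combine to $\epsilon_0^2$ and summing over $i$ yields the third term.

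The hardest part will be step three's treatment of the backtrack remainder (iv): unlike \ams, \our's update is effectively a two-step heavy-ball-type recursion, so the squared-distance identity no longer telescopes cleanly and the coupling between consecutive steps must be shown to contribute only an $O(1)$ additive constant rather than a quantity growing with $T$. The key leverage is that $d_t$ is nonpositive with $|d_t|\leq\tfrac12$ while $\mb{\Delta}_{t-1}$ inherits the geometric control from Lemma~\ref{lemma:m_t}, so its contribution stays summable. A secondary difficulty is that $\beta_{1,t}$ is data-dependent; I would handle this throughout by never using its exact value, relying only on the uniform envelope $\epsilon_0\leq\beta_{1,t}$ and $1-\beta_{1,t}\leq(1-\epsilon_0)\lambda^{t-1}$ supplied by the theorem's hypotheses.
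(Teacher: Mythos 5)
Your overall template (convexity, per-coordinate squared-distance recursion, the three bounds via $\beta_{1,t}\geq\epsilon_0$, $1-\beta_{1,t}\leq(1-\epsilon_0)\lambda^{t-1}$, Lemma~\ref{lemma:m_t}, and the {\ams}-type estimate for $\sum_t\eta_t\mb{m}_{t,i}^2/\sqrt{\mb{\hat{v}}_{t,i}}$) matches the paper's proof for terms (i)--(iii). The divergence, and the genuine gap, is in your fourth piece. The paper never produces a backtrack remainder at all: it asserts that $\mb{\Delta}_t = d_t\mb{\Delta}_{t-1}-\eta_t\mb{m}_t/\sqrt{\mb{\hat{v}}_t}$ can be rewritten as $-\hat{\eta}_t\,\mb{m}_t/\sqrt{\mb{\hat{v}}_t}$ with $\hat{\eta}_t=\mu_t\eta_t$, $\mu_t\in[0.5,1]$, i.e.\ the backtrack only rescales the effective learning rate, and then runs the standard one-step {\ams} telescoping with $\hat{\eta}_t$ in place of $\eta_t$. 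That is why the stated bound has exactly three terms and no additive contribution from $d_t$. (Whether that absorption is itself rigorous is debatable, since $\mb{\Delta}_{t-1}$ need not be parallel to $\mb{m}_t/\sqrt{\mb{\hat{v}}_t}$, but it is the load-bearing step of the paper's argument and you do not have a substitute for it.)

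Concretely, your claim that the remainder (iv) ``contributes only an $O(1)$ additive constant'' does not follow from the tools you list. Solving the squared-distance identity for $\mb{g}_{t,i}(\mb{w}_{t,i}-\mb{w}^*_i)$ puts the backtrack cross term in the form $\tfrac{\sqrt{\mb{\hat{v}}_{t,i}}}{\eta_t\beta_{1,t}}\,d_t\,\mb{\Delta}_{t-1,i}(\mb{w}_{t,i}-\mb{w}^*_i)$, carrying the same $\sqrt{\mb{\hat{v}}_{t,i}}/(\eta_t\beta_{1,t})\sim\sqrt{t}$ prefactor as the telescoping piece. Bounding $|d_t|\leq\tfrac12$, $|\mb{w}_{t,i}-\mb{w}^*_i|\leq D_\infty$, and $|\mb{\Delta}_{t-1,i}|\lesssim 2\eta_{t-1}|\mb{m}_{t-1,i}|/\sqrt{\mb{\hat{v}}_{t-1,i}}$ makes each summand $O(1)$ (the $\eta$'s and $\sqrt{\mb{\hat{v}}}$'s essentially cancel), so the naive sum is $O(T)$ --- linear regret --- not a constant; there is no telescoping and no sign cancellation established to rescue it. Even the more favorable accounting (before dividing by $2\eta_t\beta_{1,t}/\sqrt{\mb{\hat{v}}_{t,i}}$) gives $\sum_t|\mb{\Delta}_{t-1,i}|=O(\sqrt{T})$, which still cannot be ``absorbed'' into the fixed middle constant $\tfrac{(1-\epsilon_0)^2G_\infty D_\infty d}{K(1-\lambda)^2\epsilon_0}$ of the stated bound. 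To recover the theorem as stated you either need the paper's effective-learning-rate reduction up front, or a genuinely new argument showing the backtrack contribution is nonpositive or telescopes.
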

\begin{proof}
	According to Definition~\ref{lemma:bound}, for $\forall t \in\{1,2,\dots,T\}$, we have
	{\scriptsize
		\begin{align*}
		f_t(\mb{w}_t) - f_t(\mb{w}^*) \leq\nabla f_t(\mb{w}_t)^\top (\mb{w}_t - \mb{w}^*)
		= \sum_{i=1}^{d} \mb{g}_{t, i}(\mb{w}_{t,i} - \mb{w}_{i}^*)
		\end{align*}
	}
	From the definition of $\mb{\Delta}_{t}$ in the updating rule of {\our}, we know it is equal to multiplying the learning rate $\eta_t$ in some iterations by a number in $[0.5, 1]$, which means $\mb{w}_{t+1} = \mb{w}_t - \hat{\eta}_t\cdot\frac{\mb{m}_t}{\sqrt{\mb{\hat{v}}_t}};\hat{\eta}_t = \mu_t\cdot\eta_t$, where $\mu_t\in [0.5, 1]$. 
	If we first focus on the $i_{th}$ element of $\mb{w}_t$, we can get
	{\scriptsize
		\begin{align*}
		&(\mb{w}_{t+1,i} - \mb{w}^*_i)^2= (\mb{w}_{t,i} - \mb{w}^*_i - \hat{\eta}_t\cdot\frac{\mb{m}_t}{\sqrt{\mb{\hat{v}}_t}})^2 =(\mb{w}_{t,i} - \mb{w}^*_i)^2 \\
		-& 2\hat{\eta}_t(\frac{(1 - \beta_{1,t})}{\sqrt{\mb{\hat{v}}_{t,i}}}\mb{m}_{t-1,i} + \frac{\beta_{1,t}}{\sqrt{\mb{\hat{v}}_{t,i}}}\mb{g}_{t,i})\cdot(\mb{w}_{t,i} - \mb{w}^*_i)+ (\hat{\eta}_t\cdot\frac{\mb{m}_t}{\sqrt{\mb{\hat{v}}_t}})^2
		\end{align*}
	}
	Then,
	{\scriptsize
		\begin{align*}
		&2\hat{\eta}_t\cdot\frac{\beta_{1,t}}{\sqrt{\mb{\hat{v}}_{t,i}}}\mb{g}_{t,i} (\mb{w}_{t,i} - \mb{w}^*_i) = (\mb{w}_{t,i} -\mb{w}^*_i)^2 - (\mb{w}_{t+1,i} -\mb{w}_i^*)^2\\
		&- 2\hat{\eta}_t\cdot\frac{(1 - \beta_{1,t})}{\sqrt{\mb{\hat{v}}_{t,i}}}\mb{m}_{t-1,i}\cdot(\mb{w}_{t,i} - \mb{w}^*_i) + \hat{\eta}^2_t\cdot\frac{\mb{m}^2_{t,i}}{\mb{\hat{v}}_{t,i}}
		\end{align*}
	}
	So we can obtain
	{\scriptsize
		\begin{align}
		\mb{g}_{t,i} (\mb{w}_{t,i} - \mb{w}^*_i) &= \frac{\sqrt{\mb{\hat{v}}_{t,i}}}{2\hat{\eta}_t\beta_{1,t}} [(\mb{w}_{t,i} - \mb{w}^*_i)^2 - (\mb{w}_{t+1,i} - \mb{w}^*_i)^2]\\
		&~~~~ + \frac{(1-\beta_{1,t})}{\beta_{1,t}}\mb{m}_{t-1,i}(\mb{w}^*_i - \mb{w}_{t,i})\\
		&~~~~ +\frac{\hat{\eta}_t}{2\beta_{1,t}} \cdot\frac{\mb{m}^2_{t,i}}{\sqrt{\mb{\hat{v}}_{t,i}}}
		\end{align}\label{fmla:1}
	}
	For the right part of $(9)$ in the above formula, if we sum it from $t=1$ to $t = T$,
	{\scriptsize
		\begin{align*}
		&\sum_{t=1}^{T}\frac{\sqrt{\mb{\hat{v}}_{t,i}}}{2\hat{\eta}_t\beta_{1,t}}[(\mb{w}_{t,i} - \mb{w}^*_i)^2 - (\mb{w}_{t+1,i} - \mb{w}^*_i)^2]\\ 
		\leq& \frac{1}{\epsilon_0}\{(\mb{w}_{1,i} - \mb{w}^*_i)^2\cdot\frac{\sqrt{\mb{\hat{v}}_{1,i}}}{\eta_1}
		+\dots + (\mb{w}_{T,i} - \mb{w}^*_i)^2(\frac{\sqrt{\mb{\hat{v}}_{T,i}}}{\eta_T} - \frac{\sqrt{\mb{\hat{v}}_{T-1,i}}}{\eta_{T-1}})\}\\ 
		\leq& \frac{D^2}{\epsilon_0\eta}\sqrt{T\mb{\hat{v}}_{T,i}}
		\end{align*}
	}
	The first inequality is satisfied because of the line 13 in Algorithm~\ref{alg:cwm}. For the $(10)$ in the formula, if we sum it from $t=1$ to $t = T$,
	{\scriptsize
		\begin{align*}
		&\sum_{t=1}^{T}\frac{(1-\beta_{1,t})}{\beta_{1,t}}\mb{m}_{t-1,i}(\mb{w}^*_i - \mb{w}_{t,i})
		\leq\frac{(1-\epsilon_0)G_\infty D_\infty}{K(1-\lambda)\epsilon_0} \sum_{t=1}^{T}(1-\beta_{1,t})\\
		\leq& \frac{(1-\epsilon_0)G_\infty D_\infty}{K(1-\lambda)\epsilon_0}\sum_{t=1}^{T}(1 - \epsilon)
		= \frac{(1-\epsilon_0)G_\infty D_\infty}{K(1-\lambda)\epsilon_0}\sum_{t=1}^{T}(1 - \epsilon_0)\lambda^{t-1}\\
		\leq& \frac{(1-\epsilon_0)^2G_\infty D_\infty}{K(1-\lambda)^2\epsilon_0}
		\end{align*}
	}
	\begin{figure*}[t]
		\centering
		\subfigure[Train loss on ORL]{\label{fig:orl_train}
			\begin{minipage}[l]{0.65\columnwidth}
				\centering
				\includegraphics[width=0.85\textwidth]{./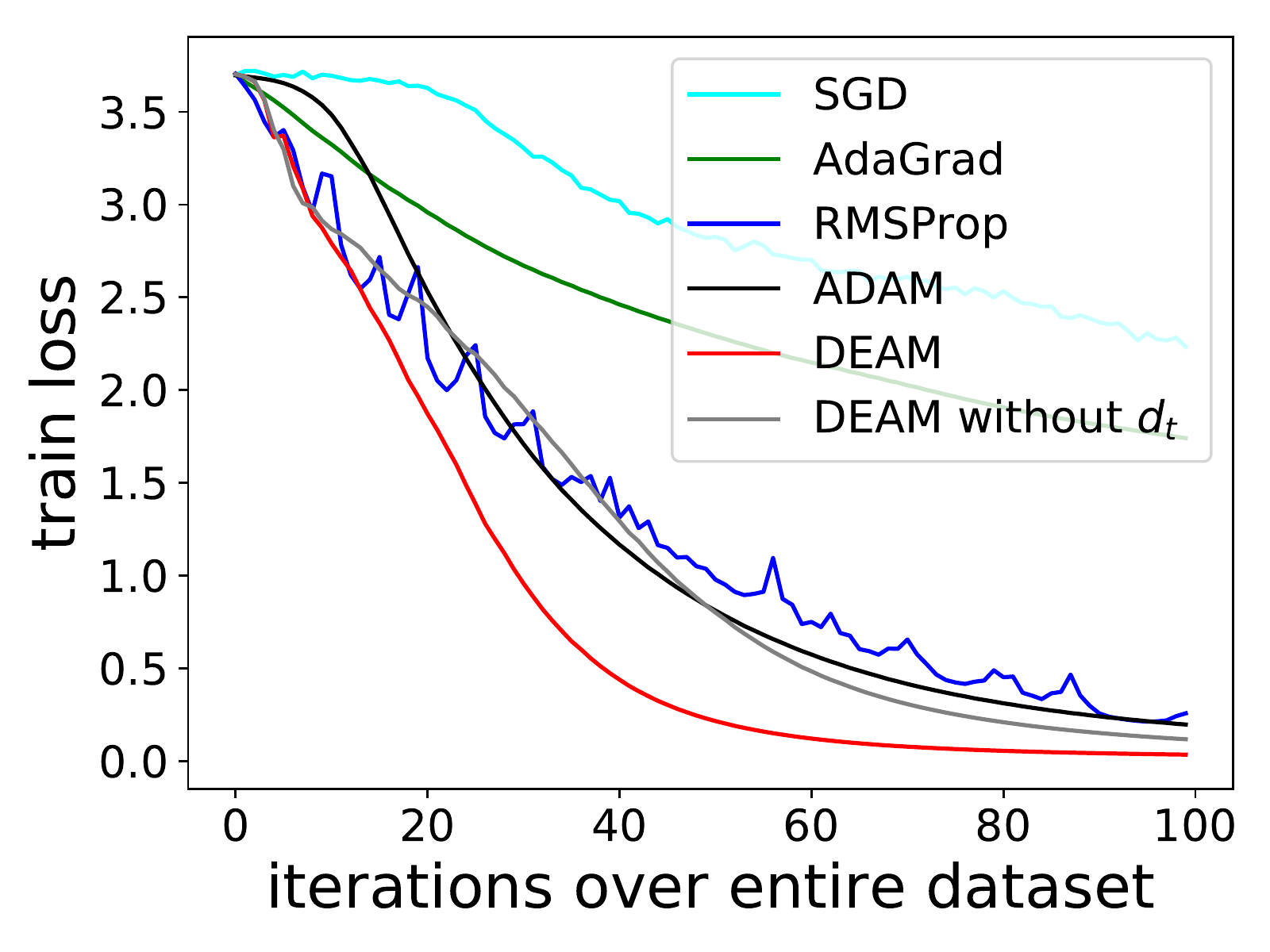}
			\end{minipage}
		}
		\subfigure[Test loss on ORL]{ \label{fig:orl_test}
			\begin{minipage}[l]{0.65\columnwidth}
				\centering
				\includegraphics[width=0.85\textwidth]{./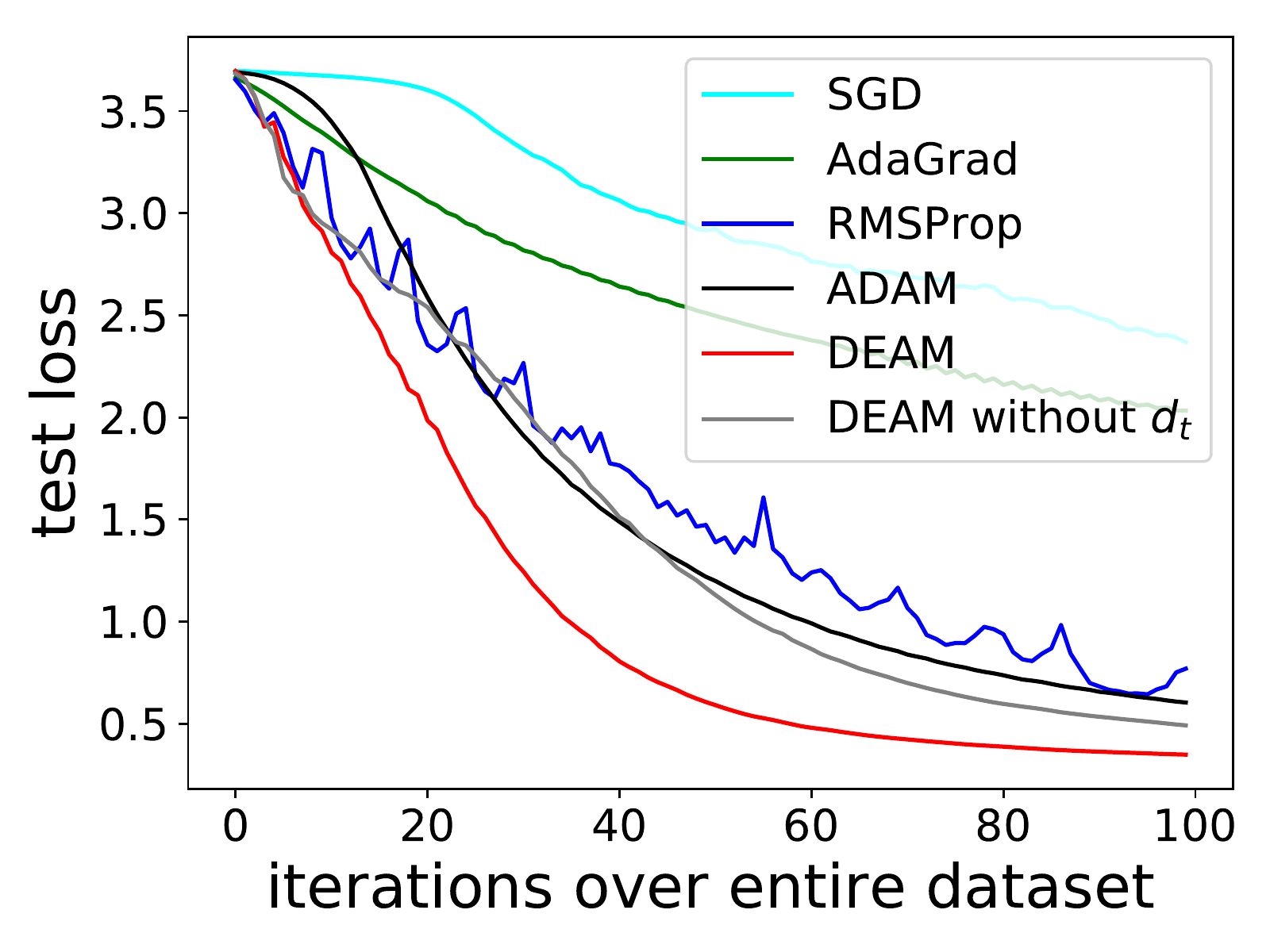}
			\end{minipage}
		}
		\vspace{-5pt}
		\subfigure[Train loss on MNIST]{ \label{fig:mnist_cnn_train}
			\begin{minipage}[l]{0.65\columnwidth}
				\centering
				\includegraphics[width=0.85\textwidth]{./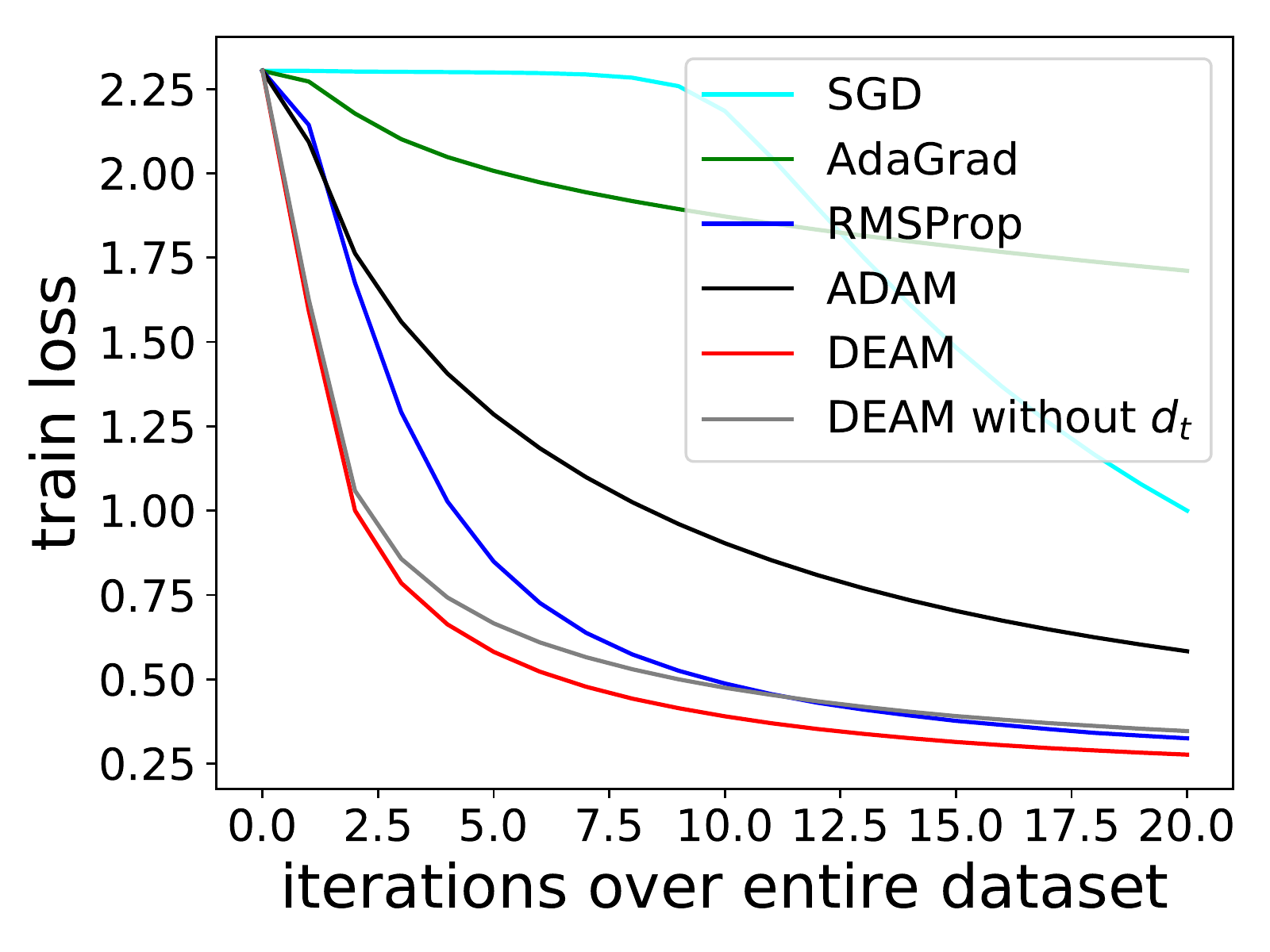}
			\end{minipage}
		}
		\subfigure[Test loss on MNIST]{ \label{fig:mnist_cnn_test}
			\begin{minipage}[l]{0.65\columnwidth}
				\centering
				\includegraphics[width=0.85\textwidth]{./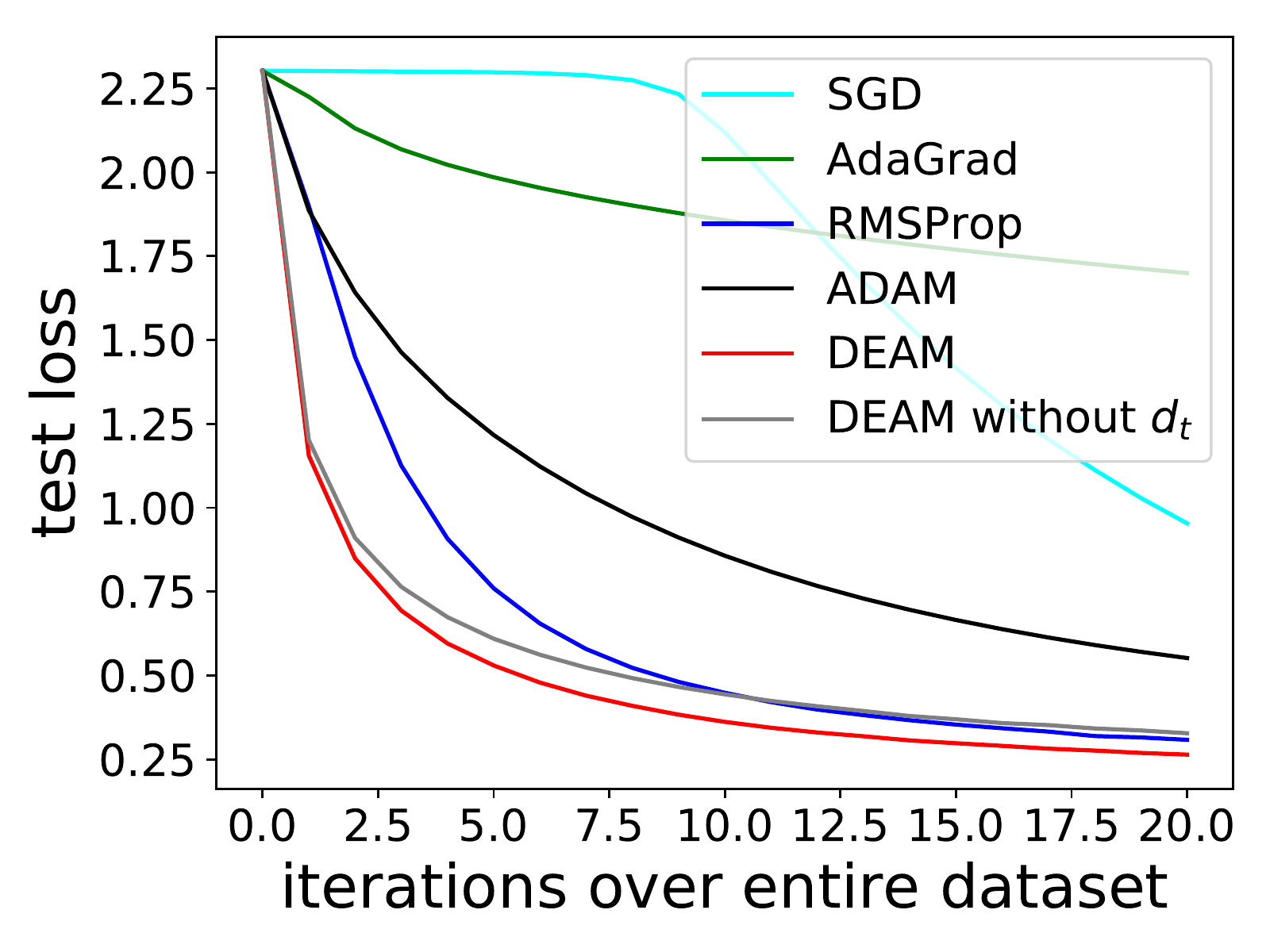}
			\end{minipage}
		}
		\subfigure[Train loss on CIFAR]{ \label{fig:cifar_train}
			\begin{minipage}[l]{0.65\columnwidth}
				\centering
				\includegraphics[width=0.85\textwidth]{./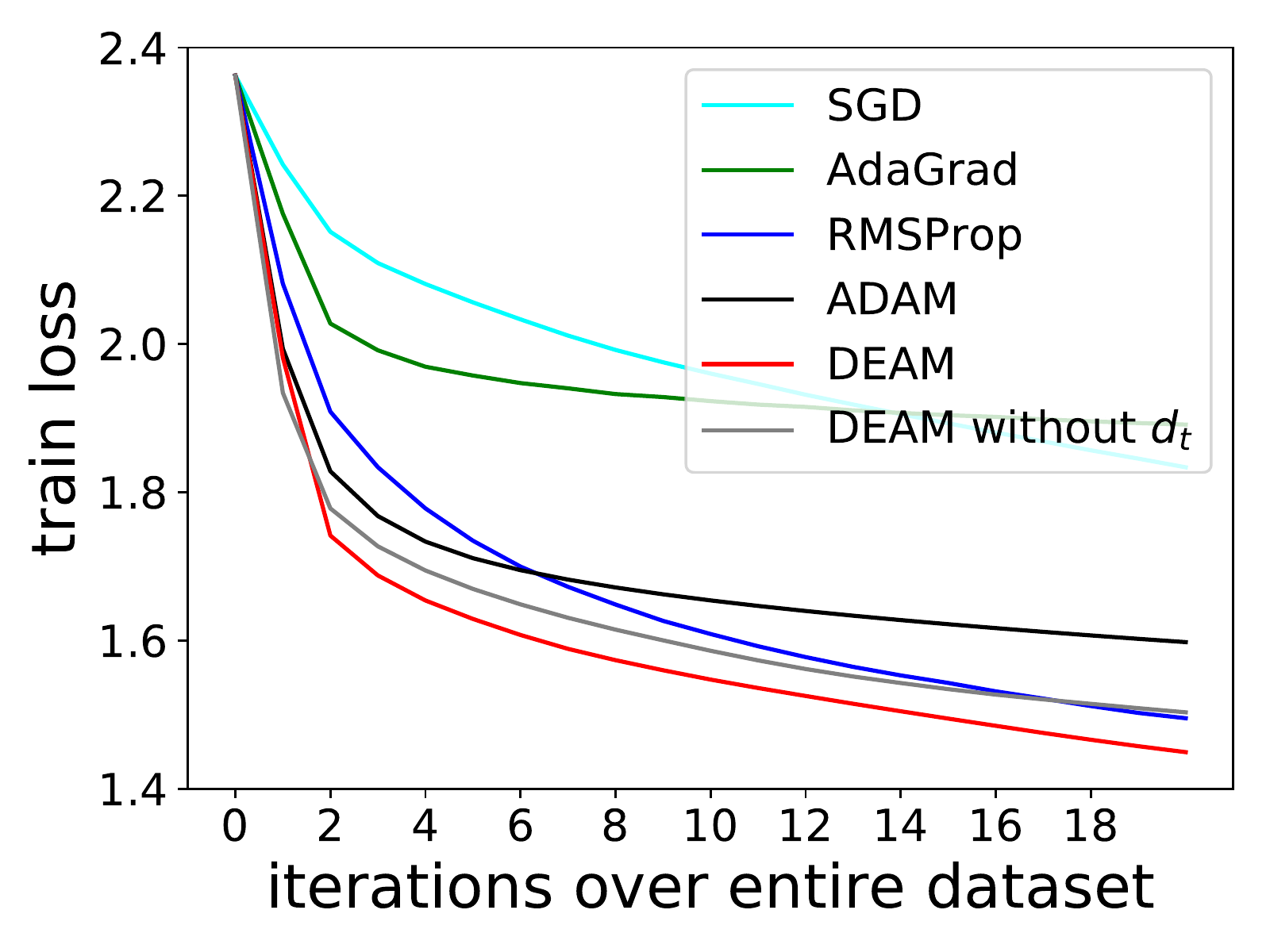}
			\end{minipage}
		}
		\subfigure[Test loss on CIFAR]{ \label{fig:cifar_test}
			\begin{minipage}[l]{0.65\columnwidth}
				\centering
				\includegraphics[width=0.85\textwidth]{./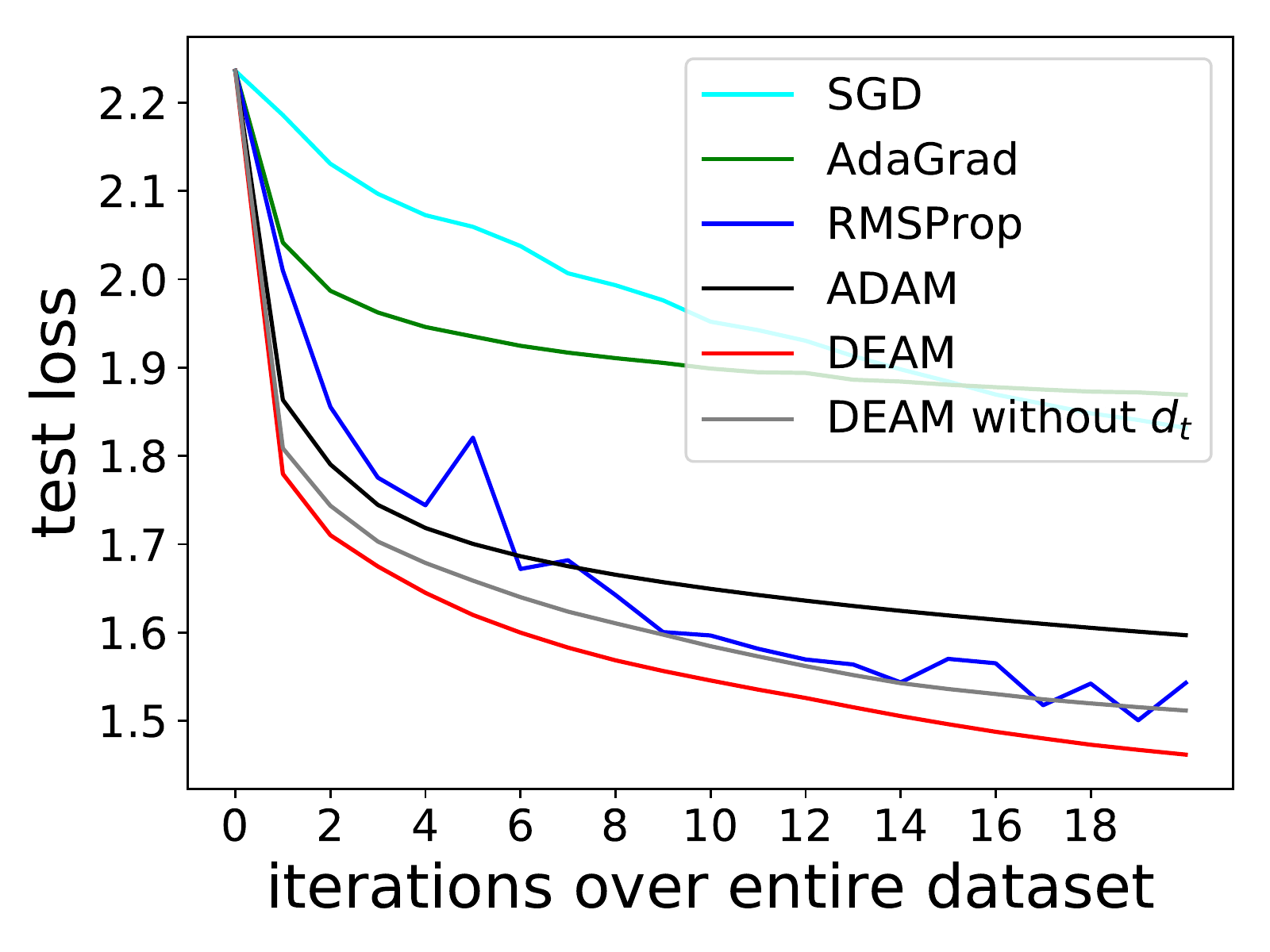}
			\end{minipage}
		}
		\vspace{-10pt}
		\caption{Results of CNN structure}\label{fig:cnn}
	\end{figure*}
	\begin{table*}[t]
		\renewcommand\arraystretch{1.1}
		\scriptsize
		\centering
		\begin{threeparttable}
			\caption{Running time of {\our} and comparison methods (the unit of values is second)}
			\begin{tabular}{|c|c|c|c|c|c|}
				\hline 
				\multirow{2}*{\tabincell{c}{\textbf{Comparison}\\\textbf{Methods}}}&\multicolumn{5}{|c|}{Running time on all models}\\
				
				\cline{2-6}
				&\tabincell{c}{Logistic Regression \\ on ORL}&DNN on MNIST&CNN on ORL&CNN on MNIST&CNN on CIFAR-10\\
				\cline{1-6}
				{\our}&\textbf{38}&\textbf{302}&\textbf{35064}&\textbf{11679}&\textbf{57761}\\
				\hline
				{\adam}&102&664&47418&21775&67584\\ 
				\hline
				{\rmsprop}&48&307&36722&11997&84305\\
				\hline
				{\adagrad}&$>200$&667&$>100000$&$>50000$&$>100000$\\
				\hline
				SGD&$>200$&346&$>100000$&16985&67564\\
				\hline
			\end{tabular}
			\label{tab:running_time}
		\end{threeparttable}
	\end{table*}
	The first inequality is according to Lemma~\ref{lemma:m_t}.	Finally, we will infer the $(11)$ in previous formula. According to the Lemma 2 of~\cite{amsgrad}, we have
	{\scriptsize
		\begin{align*}
		\small
		&\sum_{t=1}^{T}\frac{\hat{\eta}_t}{2\beta_{1,t}}\cdot\frac{\mb{m}^2_{t,i}}{\sqrt{\mb{\hat{v}}_{t,i}}}\leq \frac{1}{2\epsilon_0}\sum_{t=1}^{T}\eta_t\frac{\mb{m}^2_{t,i}}{\sqrt{\mb{v}_{t,i}}}
		\leq \frac{\eta}{2\epsilon_0}\sum_{t=1}^{T}\frac{1}{\sqrt{t}}\cdot\frac{\mb{m}^2_{t,i}}{\sqrt{\mb{v}_{t,i}}}\\
		\leq& \frac{\eta}{2\epsilon_0} \sum_{t=1}^{T}\frac{(\sum_{j=1}^{t}(1-\epsilon_0)^{t-j})(\sum_{j=1}^{t}(1-\epsilon_0)^{t-j}\mb{g}^2_{j,i})}{\sqrt{t((1-\beta_2)\sum_{j=1}^{t}\beta_2^{t-j}\mb{g}^2_{j,i})}}\\
		\leq& \frac{\eta }{2\epsilon^2_0\sqrt{1-\beta_2}}\sum_{t=1}^{T}\frac{1}{\sqrt{t}}\sum_{j=1}^{t}\frac{(1-\epsilon_0)^{t-j}\mb{g}^2_{j,i}}{\sqrt{\beta_2^{t-j}\mb{g}^2_{j,i}}}\\
		\leq& \frac{\eta }{2\epsilon^2_0\sqrt{1-\beta_2}}\sum_{t=1}^{T}|\mb{g}_{t,i}|\sum_{j=t}^{T}\frac{\gamma_1^{j-t}}{\sqrt{t}} 
		\leq \frac{\eta \sqrt{1+\log T}}{2\epsilon^2_0(1-\gamma_1)\sqrt{1-\beta_2}}\left\|\mb{g}_{1:T,i}\right\|_2
		\end{align*}
	}
	In the above inequalities, some inferences are based on Cauchy-Schwarz Inequality. Therefore, the final bound of $R(T)$ can be expressed as
	{\scriptsize
		\begin{align*}
		R(T) &\leq \frac{D^2}{\epsilon_0\eta}\sum_{i=1}^{d}\sqrt{T\mb{\hat{v}}_{T,i}} + \frac{(1-\epsilon_0)^2G_\infty D_\infty d}{K(1-\lambda)^2\epsilon_0}\\ &+ \frac{\eta \sqrt{1+\log T}}{2\epsilon^2_0(1-\gamma_1)\sqrt{1-\beta_2}}\sum_{i=1}^{d}\left\|\mb{g}_{1:T,i}\right\|_2
		\end{align*}
	}
\end{proof}	
For the bound term, as $T\rightarrow +\infty$, $\frac{R(T)}{T}\rightarrow 0$ and we can infer that $\displaystyle\lim_{T\rightarrow \infty} [f_t(\mb{w}_t) - f_t(\mb{w}^*)] = 0$, which means the proposed algorithm can finally converge. 


\section{Experiments}\label{sec:experiment}
We have applied the {\our} algorithm on multiple popular machine learning and deep learning structures, both convex and non-convex situations. To show the advantages of the algorithm, we compare it with various popular optimization algorithms, including {\adam}~\cite{ADAM}, RMSProp~\cite{rmsprop}, AdaGrad~\cite{adagrad} and SGD. For all the experiments, the loss function (objective function) we have selected is the cross-entropy loss function, and the size of the minibatch is 128. Besides, the learning rate is 0.0001.

\subsection{Experiment Settings and Results}

\noindent\textbf{Logistic Regression}: We firstly evaluate our algorithm on the multi-class logistic regression model, since it is widely used and owns a convex objective function. We conduct logistic regression on the ORL dataset~\cite{ORL}. ORL dataset consists of face images of 40 people, each person has ten images and each image is in the size of $112\times 92$. The loss of objective functions on both training set and testing set are shown in Figure~\ref{fig:orl_1}, \ref{fig:orl_2}.

\noindent\textbf{Deep Neural Network}: We use deep neural network (DNN) with two fully connected layers of 1,000 hidden units and the Relu~\cite{relu} activation function. The dataset we use is MNIST~\cite{GBDR}. The MNIST dataset includes 60,000 training samples and 10,000 testing samples, where each sample is a $28\times 28$ image of hand-written numbers from 0 to 9. Result are exhibited in Figure~\ref{fig:dnn_1}, \ref{fig:dnn_2}.

\noindent\textbf{Convolutional Neural Network}: The CNN models in our experiments are based on the LeNet-5~\cite{GBDR}, and it is implemented on multiple datasets: ORL, MNIST and CIFAR-10~\cite{CIFAR}. The CIFAR-10 dataset consists of 60,000 $32\times 32$ images in 10 classes, with 6,000 images per class. For different datatsets, the structures of CNN models are modified: for the ORL dataset, the CNN model has two convolutional layers with 16 and 36 feature maps of $5\time 5$ kernels and 2 max-pooling layers, and a fully connected layer with 1024 neurons; for the MNIST dataset, the CNN structure follows the LeNet-5 structure in~\cite{GBDR}; for CIFAR-10 dataset, the CNN model consists of three convolutional layers with 64, 128, 256 kernels respectively, and a fully connected layer having 1024 neurons. All experiments apply Relu~\cite{relu} activation function. The results are shown in Figure~\ref{fig:cnn}.

We can observe that {\our} converges faster than other widely used optimization algorithms in all the cases. Within the same number of epoches, {\our} can converge to the lowest loss on both the training set and test set.




\subsection{Analysis of the Backtrack Mechanism}

To show the effectiveness of the backtrack term $d_t$, we also carry out the experiments of {\our} without $d_t$ term, and exhibit the results in Figures~\ref{fig:dnn}, \ref{fig:cnn}. The results indicate that after applying $d_t$ term, the converging speed will become slightly faster. To thoroughly prove the effectiveness of our proposed $d_t$ term, we also compare it with other definitions of backtrack terms e.g., $d_t = 0.5\cos\theta$, and exhibit the results in Figure~\ref{fig:d_t_other}. In Figure~\ref{fig:d_t_other}, $d_t = sigmoid \_ based$ represents that $d_t = -1/(1+ e^{-(\theta - \frac{1}{2}\pi)}) + \frac{1}{2}$, and $d_t = tanh \_ based$ means that $d_t = -2/(1 + e^{-2(x - \frac{1}{2}\pi)}) + 1$. 
\begin{figure}[t]
	\centering
	\includegraphics[width=5cm,height=3.5cm]{./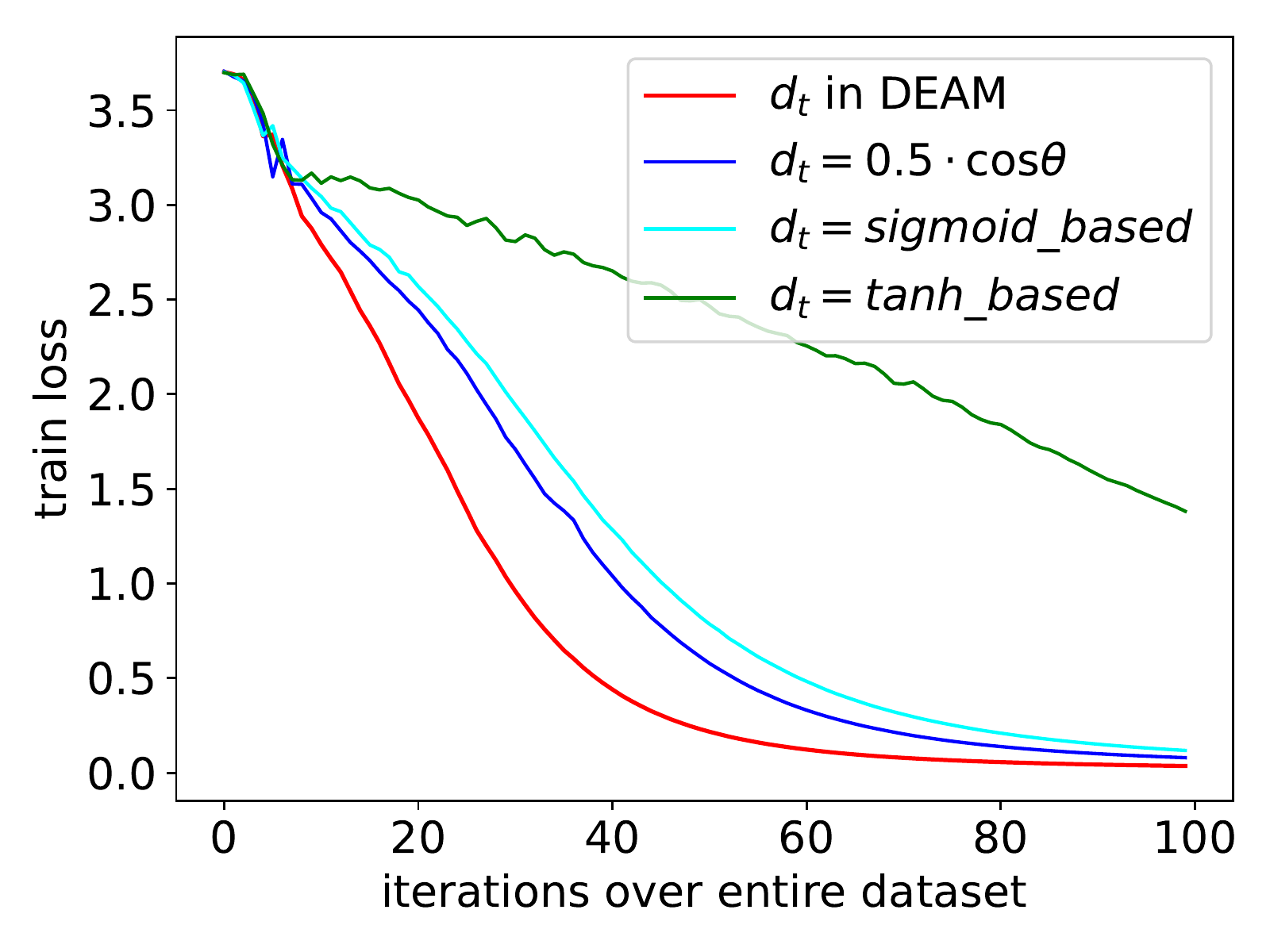}
	\caption{$d_t$ terms in other definitions}
	\vspace{-5pt}
	\label{fig:d_t_other}
\end{figure}
Due to the limited sapce, here we only exhibit the results on ORL dataset. From the results we can observe that the $d_t$ definnition in {\our} achieves the best congerging performance, which means our setting of $d_t$ in {\our} is effective.

\subsection{Time-consuming Analysis}

We have recorded the running time of {\our} and other comparison algorithms in every experiment, and list them in the Table~\ref{tab:running_time}. The running time shown in Table~\ref{tab:running_time} contains ``$>$'', which means the model still does not converge at the specific time. From the results we can observe that in all of our experiments, {\our} finally converges within the smallest mount of time. From the results in Figures~\ref{fig:dnn},\ref{fig:cnn} and Table~\ref{tab:running_time}, we can conclude that {\our} can converge not only in fewer epochs, but using less running time. The device we used is the Dell PowerEdge T630 Tower Server, with 80 cores 64-bit Intel Xeon CPU E5-2698 v4@2.2GHz. The total memory is 256 GB, with an extra (SSD) swap of 250 GB.

\section{Conclusion}\label{sec:conclusion}
In this paper, we have introduced a novel optimization algorithm, the {\our}, which implements the momentum with discriminative weights and the backtrack term. We have analyzed the advantages of the proposed algorithm and proved it by theoretical inference. Extensive experiments have shown that the proposed algorithm can converge faster than existing methods on both convex and non-convex situations, and the time consuming is better than existing methods. Not only the proposed algorithm can outperform other popular optimization algorithms, but fewer hyperparameters will be introduced, which makes the {\our} much more applicable. 


\newpage
\bibliographystyle{named}
\bibliography{reference}

\begin{thebibliography}{}

\bibitem[\protect\citeauthoryear{Bahdanau \bgroup \em et al.\egroup
  }{2014}]{DKY14}
Dzmitry Bahdanau, Kyunghyun Cho, and Yoshua Bengio.
\newblock Neural machine translation by jointly learning to align and
  translate.
\newblock In {\em CoRR}, 2014.

\bibitem[\protect\citeauthoryear{Behera \bgroup \em et al.\egroup
  }{2006}]{behera2006adaptive}
Laxmidhar Behera, Swagat Kumar, and Awhan Patnaik.
\newblock On adaptive learning rate that guarantees convergence in feedforward
  networks.
\newblock {\em IEEE transactions on neural networks}, 2006.

\bibitem[\protect\citeauthoryear{Bianchi and
  Jakubowicz}{2012}]{bianchi2012convergence}
Pascal Bianchi and J{\'e}r{\'e}mie Jakubowicz.
\newblock Convergence of a multi-agent projected stochastic gradient algorithm
  for non-convex optimization.
\newblock {\em IEEE Transactions on Automatic Control}, 2012.

\bibitem[\protect\citeauthoryear{Chen \bgroup \em et al.\egroup
  }{2019}]{adafom}
Xiangyi Chen, Sijia Liu, Ruoyu Sun, and Mingyi Hong.
\newblock On the convergence of a class of adam-type algorithms for non-convex
  optimization.
\newblock In {\em ICLR}, 2019.

\bibitem[\protect\citeauthoryear{Dauphin \bgroup \em et al.\egroup
  }{2015}]{dauphin2015equilibrated}
Yann Dauphin, Harm De~Vries, and Yoshua Bengio.
\newblock Equilibrated adaptive learning rates for non-convex optimization.
\newblock In {\em NIPS}, 2015.

\bibitem[\protect\citeauthoryear{Dong \bgroup \em et al.\egroup
  }{2015}]{LFMK15}
Li~Dong, Furu Wei, Ming Zhou, and Ke~Xu.
\newblock Question answering over freebase with multi-column convolutional
  neural networks.
\newblock In {\em ACL}, 2015.

\bibitem[\protect\citeauthoryear{Dozat}{2016}]{dozat2016incorporating}
Timothy Dozat.
\newblock Incorporating nesterov momentum into adam.
\newblock In {\em ICLR Workshop}, 2016.

\bibitem[\protect\citeauthoryear{Duchi \bgroup \em et al.\egroup
  }{2011}]{adagrad}
John Duchi, Elad Hazan, and Yoram Singer.
\newblock Adaptive subgradient methods for online learning and stochastic
  optimization.
\newblock {\em Journal of Machine Learning Research}, 2011.

\bibitem[\protect\citeauthoryear{He \bgroup \em et al.\egroup }{2016}]{KXSJ16}
Kaiming He, Xiangyu Zhang, Shaoqing Ren, and Jian Sun.
\newblock Deep residual learning for image recognition.
\newblock In {\em CVPR}, 2016.

\bibitem[\protect\citeauthoryear{Keskar and
  Socher}{2017}]{switch_from_adam_sgd}
Nitish~Shirish Keskar and Richard Socher.
\newblock Improving generalization performance by switching from adam to sgd.
\newblock In {\em arXiv:1712.07628}, 2017.

\bibitem[\protect\citeauthoryear{Kingma and Ba}{2015}]{ADAM}
Diederik~P. Kingma and Jimmy~Lei Ba.
\newblock Adam: A method for stochastic optimization.
\newblock In {\em ICLR}, 2015.

\bibitem[\protect\citeauthoryear{Krizhevsky}{2009}]{CIFAR}
Alex Krizhevsky.
\newblock Learning multiple layers of features from tiny images.
\newblock In {\em CRC Press}, 2009.

\bibitem[\protect\citeauthoryear{LeCun \bgroup \em et al.\egroup }{1998}]{GBDR}
Yann LeCun, Leon Bottou, Yoshua Bengio, and Patrick Haffner.
\newblock Gradient-base learning applied to document recognition.
\newblock In {\em IEEE}, 1998.

\bibitem[\protect\citeauthoryear{Li \bgroup \em et al.\egroup
  }{2017}]{li2017convergence}
Qunwei Li, Yi~Zhou, Yingbin Liang, and Pramod~K Varshney.
\newblock Convergence analysis of proximal gradient with momentum for nonconvex
  optimization.
\newblock In {\em ICML}, 2017.

\bibitem[\protect\citeauthoryear{Luo \bgroup \em et al.\egroup
  }{2019}]{luo2019adaptive}
Liangchen Luo, Yuanhao Xiong, Yan Liu, and Xu~Sun.
\newblock Adaptive gradient methods with dynamic bound of learning rate.
\newblock In {\em ICLR}, 2019.

\bibitem[\protect\citeauthoryear{Mitliagkas \bgroup \em et al.\egroup
  }{2016}]{mitliagkas2016asynchrony}
Ioannis Mitliagkas, Ce~Zhang, Stefan Hadjis, and Christopher R{\'e}.
\newblock Asynchrony begets momentum, with an application to deep learning.
\newblock In {\em Annual Allerton Conference on Communication, Control, and
  Computing (Allerton)}, 2016.

\bibitem[\protect\citeauthoryear{Nair and Hinton}{2010}]{relu}
Vinod Nair and Geoffrey~E. Hinton.
\newblock Rectified linear units improve restricted boltzmann machines.
\newblock In {\em ICML}, 2010.

\bibitem[\protect\citeauthoryear{Nesterov}{1983}]{nag}
Yurii Nesterov.
\newblock A method for unconstrained convex minimization problem with the rate
  of convergence o(1/k2).
\newblock In {\em Doklady ANUSSR}, 1983.

\bibitem[\protect\citeauthoryear{Neumann and Vu}{2017}]{MN17}
Michael Neumann and Ngoc~Thang Vu.
\newblock Attentive convolutional neural network based speech emotion
  recognition: A study on the impact of input features, signal length, and
  acted speech.
\newblock In {\em CoRR}, 2017.

\bibitem[\protect\citeauthoryear{Qian}{1999}]{moment}
Ning Qian.
\newblock On the momentum term in gradient descent learning algorithms.
\newblock {\em Neural networks : The Official Journal of the International
  Neural Network Society}, 1999.

\bibitem[\protect\citeauthoryear{Reddi \bgroup \em et al.\egroup
  }{2015}]{on_variance_reduction}
Sashank~J Reddi, Ahmed Hefny, Suvrit Sra, Barnabas Poczos, and Alexander~J
  Smola.
\newblock On variance reduction in stochastic gradient descent and its
  asynchronous variants.
\newblock In {\em NeurIPS}, 2015.

\bibitem[\protect\citeauthoryear{Reddi \bgroup \em et al.\egroup
  }{2018}]{amsgrad}
Sashank~J. Reddi, Satyen Kale, and Sanjiv Kumar.
\newblock On the convergence of adam and beyond.
\newblock In {\em ICLR}, 2018.

\bibitem[\protect\citeauthoryear{Ruder}{2017}]{sgd_overview}
Sebastian Ruder.
\newblock An overview of gradient descent optimization algorithms.
\newblock In {\em arXiv:1609.04747v2}, 2017.

\bibitem[\protect\citeauthoryear{Samaria and Harter}{1994}]{ORL}
Ferdinando Samaria and Andy Harter.
\newblock Parameterisation of a stochastic model for human face identification.
\newblock In {\em IEEE Workshop on Applications of Computer Vision}, 1994.

\bibitem[\protect\citeauthoryear{Sutskever \bgroup \em et al.\egroup
  }{2013a}]{impor_mom_deeplearning}
Ilya Sutskever, James Martens, George Dahl, and Geoffrey Hinton.
\newblock On the importance of initialization and momentum in deep learning.
\newblock In {\em ICML}, 2013.

\bibitem[\protect\citeauthoryear{Sutskever \bgroup \em et al.\egroup
  }{2013b}]{IJGG13}
Ilya Sutskever, James Martens, George~E. Dahl, and Geoffrey~E. Hinton.
\newblock On the importance of initialization and momentum in deep learning.
\newblock In {\em ICML}, 2013.

\bibitem[\protect\citeauthoryear{Tieleman and Hinton}{2012}]{rmsprop}
Tijmen Tieleman and Geoffrey~E. Hinton.
\newblock Leture 6.5 rmsprop,coursera: Neural networks for machine learning.
\newblock In {\em Tehcnical report}, 2012.

\bibitem[\protect\citeauthoryear{Zeiler}{2012}]{adadelta}
Matthew~D. Zeiler.
\newblock Adadelta: An adaptive learning rate method.
\newblock In {\em arXiv:1212.5701v1}, 2012.

\bibitem[\protect\citeauthoryear{Zhang \bgroup \em et al.\egroup
  }{2018}]{normalized_direction}
Zijun Zhang, Lin Ma, Zongpeng Li, and Chuan Wu.
\newblock Normalized direction-preserving adam.
\newblock In {\em arXiv: 1709.04546v2}, 2018.

\bibitem[\protect\citeauthoryear{Zhang}{2019}]{JiaweiZhang2019Tutorial}
Jiawei Zhang.
\newblock Graph neural networks for small graph and giant network
  representation learning: An overview.
\newblock Technical report, IFM lab, 2019.

\bibitem[\protect\citeauthoryear{Zinkevich}{2003}]{online_convex_program}
Martin Zinkevich.
\newblock On convex programming and generalized infinitesimal gradient ascent.
\newblock In {\em ICML}, 2003.

\end{thebibliography}

\end{document}